\newtheorem{theorem}{Theorem}[section]
\newtheorem{lemma}[theorem]{Lemma}
\newtheorem{definition}[theorem]{Definition}
\newtheorem{example}[theorem]{Example}
\newcommand{\exref}[1]{Example \ref{ex:#1}}
\newcommand{\appref}[1]{Appendix \ref{ap:#1}}
\renewcommand{\eqref}[1]{Eq.~(\ref{eq:#1})}
\newcommand{\algref}[1]{Alg.~\ref{alg:#1}}
\newcommand{\secref}[1]{Section \ref{sec:#1}}
\newcommand{\thmref}[1]{Theorem \ref{thm:#1}}
\newcommand{\lemref}[1]{Lemma \ref{lem:#1}}
\renewcommand{\P}{\mathbb{P}}
\newcommand{\E}{\mathbb{E}}
\newcommand{\reals}{\mathbb{R}}
\newcommand{\half}{{\frac12}}
\newcommand{\ceil}[1]{{\lceil #1\rceil}}
\newcommand{\floor}[1]{\left\lfloor #1\right\rfloor}
\newcommand{\one}{\mathbb{I}}
\DeclareMathOperator*{\argmax}{argmax}
\newcommand{\cA}{\mathcal{A}}
\newcommand{\cD}{\mathcal{D}}
\newcommand{\cG}{\mathcal{G}}
\newcommand{\cH}{\mathcal{H}}
\newcommand{\cU}{\mathcal{U}}
\newcommand{\cX}{\mathcal{X}}
\newcommand{\cY}{\mathcal{Y}}
\newcommand{\mysharedbib}{\string~/Dropbox/Research/useful/shared}
\newcommand{\secp}{\mathrm{SecPr}}
\newcommand{\distset}{\textsc{ds}}
\newcommand{\outpool}{\mathsf{sel}_p}
\newcommand{\outpoolr}{\mathsf{pairs}_p}
\newcommand{\outstream}{\mathsf{sel}_s}
\newcommand{\outstreamr}{\mathsf{pairs}_s}
\title{Interactive Algorithms: from Pool to Stream}
\author{Sivan Sabato and Tom Hess\\
Department of Computer Science\\
Ben-Gurion University of the Negev}
\date{}
\newcommand{\nselect}{N_{\mathsf{sel}}}
\newcommand{\niter}{N_{\mathsf{iter}}}
\begin{document}
\maketitle

\begin{abstract}
 We consider interactive algorithms in the pool-based setting, and in the stream-based setting. Interactive algorithms observe suggested elements (representing actions or queries), and interactively select some of them and receive responses. Pool-based algorithms can select elements at any order, while stream-based algorithms observe elements in sequence, and can only select elements immediately after observing them. We assume that the suggested elements are generated independently from some source distribution, and ask what is the stream size required for emulating a pool algorithm with a given pool size. We provide algorithms and matching lower bounds for general pool algorithms, and for utility-based pool algorithms. We further show that a maximal gap between the two settings exists also in the special case of active learning for binary classification.
\end{abstract}

\section{Introduction}

\emph{Interactive algorithms} are algorithms which are presented with input in the form of suggested elements (representing actions or queries), and iteratively select elements, getting a response for each selected element. The reward of the algorithm, which is application-specific, is a function of the final set of selected elements along with their responses. Interactive algorithms are used in many application domains, including, for instance, active learning \citep{MccallumNi98}, interactive sensor placement \citep{GolovinKr11}, summarization \citep{SinglaTsKr16} and promotion in social networks \citep{GuilloryBi10}. As a specific motivating example, consider an application in which elements represent web users, and the algorithm should select up to $q$ users to present with a free promotional item. For each selected user, the response is the observed behavior of the user after having received the promotion, such as the next link that the user clicked on. The final reward of the algorithm depends on the total amount of promotional impact it obtained, as measured by some function of the set of selected users and their observed responses. Note that the algorithm can use responses from previous selected users when deciding on the next user to select.

We consider two interaction settings for interactive algorithms: The \emph{pool-based} setting and the \emph{stream-based} setting. In the pool-based setting, the entire set of suggested elements is provided in advance to the algorithm, which can then select any of the elements at any order. For instance, in the web promotion example, there might be a set of users who use the website for an extended period of time, and any of them can be approached with a promotion. In the stream-based setting, elements are presented to the algorithm in sequence, and the algorithm must decide immediately after observing an element, whether to select it or not. In the web promotion example, this is consistent with a setting where users access the website for single-page sessions, and so any promotion must be decided on immediately when the user is observed.

The stream-based setting is in general weaker than the pool-based setting. Nonetheless, it is important and
useful: In many real-life scenarios, it is not possible to postpone selection of elements, for instance due to storage and
retrieval constraints, or because of timing constraints. This is especially pertinent when the data stream is
real-time in nature, such as in streaming document classification
\citep{BougueliaBeBe13}, in spam filtering \citep{ChuZiLiAcBe11}, in web
streams such as Twitter \citep{SmailovicGrLaZn14}, in video
surveillance \citep{LoyHoXiSh12} and with active sensors
\citep{Krishnamurthy02}. 

In this work, our goal is to study the relationship between these two important settings. Both of these settings have been widely studied in many contexts. 
In active learning, both settings have been studied in classic works \citep{CohnAtLa94,LewisGa94}. Works that address mainly the stream-based setting include, for instance, \citet{BalcanBeLa09,Hanneke11,Dasgupta12, BalcanLo13,SabatoMu14}. Some theoretical results hold equally for the stream-based and the pool-based settings \citep[e.g.,][]{BalcanLo13,HannekeYa15}. 

Several near-optimal algorithms have been developed for the pool-based setting \citep{Dasgupta05, GolovinKr11, GolovinKrRa10, Hanneke07b, SabatoSaSr13, GonenSaSh13b, CuongLeYe14}. The pool-based setting is
also heavily studied in various active learning applications \citep[e.g.,][]{TongKoller02,TongCh01,MitraMuPa04,GosselinCo08,CebronBe09,GuoSiStKo13}.
General interactive algorithms have also been studied in both a pool-based setting \cite[e.g.,][]{GolovinKr11,GuilloryBi10,DeshpandeHeKl14} and in stream-based settings \cite[e.g.,][]{DemaineInMaVa14,ArlottoMoSt14,StreeterGo09,GolovinFaKr10}. Note that unlike some works on interactive algorithms, in our stream-based setting, the only direct restriction is on the timing of selecting elements. We do not place restrictions on storage space or any other resources. 

To study the relationship between the pool-based setting and the
stream-based setting, we assume that in both settings the suggested elements,
along with their hidden responses, are drawn i.i.d.~from some unknown source
distribution. We then ask under what conditions, and at what cost, can a
stream-based algorithm obtain the same output distribution as a given black-box
pool algorithm. Such an exact emulation is advantageous, as it allows direct application of methods and results developed for the pool-based setting, in the stream-based setting. Especially, if a pool-based algorithm succeeds in practice, but its analysis is unknown or limited, exact emulation guarantees that success is transferred to the stream setting as well.

For discrete source distributions, any pool-based algorithm can be emulated in
a stream-based setting, simply by waiting long enough, until the desired element shows up again. The challenge for stream-based interactive algorithms is thus to achieve the same output distribution as a pool-based algorithm, while observing as few suggested elements as possible. Clearly, there are many cases in which it is desired to require less suggested elements: this could result in saving of resources such as time, money, and communication. In active learning as well, while examples are usually assumed cheap, they are not usually completely free in all respects.

We study emulation of pool-based algorithm in two settings. First, we consider the fully general case. We provide a stream algorithm that can emulate any given black-box pool algorithm, and uses a uniformly bounded expected number of observed elements. The bound on the expected number of observed elements is exponential in the number of selected elements. We further prove a lower bound which indicates that this exponential dependence is necessary. Second, we consider  \emph{utility-based} interactive algorithm for the pool setting. We provide a stream algorithm that emulates such pool algorithms, using repeated careful solutions of the well known ``Secretary Problem'' \citep{Dynkin63, GilbertMo66, Ferguson89}. The expected number of observed elements for this algorithm is only linear in the number of selected elements. In this case too we prove a matching lower bound.

Finally, we show a lower bound that applies to active learning for binary classification. We conclude that even in this well-studied setting, there are cases in which there exists a significant gap between the best pool-based algorithm and the best stream-based algorithm. This result  generalizes a previous observation of \cite{GonenSaSh13b} on the sub-optimality of CAL \citep{CohnAtLa94}, the classical stream-based active learning algorithm, compared to pool algorithms.

This paper is structured as follows: In \secref{def} formal definitions and notations are provided. \secref{simple} discusses natural but suboptimal solutions. \secref{gen} provides an algorithm and a lower bound for the general case, and \secref{utility} addresses the case of utility-based pool algorithms. In \secref{active} we provide a lower bound that holds for active learning for binary classification. We conclude in \secref{conclusion}. Some of the proofs are provided in \appref{proofs}.

\section{Definitions} \label{sec:def}

For a predicate $p$, denote by $\one[p]$ the indicator function which is $1$ if $p$ holds and zero otherwise. For an integer $k$, denote $[k] := \{1,\ldots,k\}$. For a sequence $S$, $S(i)$ is the $i$'th member of the sequence. Denote concatenation of sequences by $\circ$. 
For $A,B$ which are both sequences, or one is a set and one a sequence, we use $A =_\pi B$ and $A \subseteq_\pi B$ to denote equality or inclusion on the unordered sets of elements in $B$ and in $A$.

Let $\cX$ be a measurable domain of elements, and let $\cY$ be a measurable domain of responses. 
A pool-based (or just pool) interactive algorithm $\cA_p$ receives as input an integer $q \leq m$, and a pool of elements $(x_1,\ldots,x_m) \in \cX^m$. We assume that for each $x_i$ there is a response $y_i \in \cY$, which is initially hidden from $\cA_p$. Denote $S = ((x_i,y_i))_{i \in [m]}$. For a given $S$, $S_X$ denotes the pool $(x_1,\ldots,x_m)$. At each round, $\cA_p$ selects one of the elements $i_t$ that have not been selected yet, and receives its response $y_{i_t}$.
After $q$ rounds, $\cA_p$ terminates. Its output is the set $\{(x_{i_1},y_{i_1}),\ldots,(x_{i_q},y_{i_q})\}$. For a pool algorithm $\cA_p$, denote by $\outpool(S,t)$ the element that $\cA_p$ selects at round $t$, if $S$ is the pool it interacts with. $\outpool(S,t)$, which can be random, can depend on $S_X$ and on $y_{i_k}$ for $k < t$.
Denote by $\outpool(S,[t])$ the sequence of elements selected by $\cA_p$ in the first $t$ rounds. $\outpoolr(S,t)$ and $\outpoolr(S,[t])$ similarly denote the selected elements along with their responses. The final output of $\cA_p$ is the set of pairs in the sequence $\outpoolr(S,[q])$. We assume that $S \mapsto \outpoolr(S,[q])$ is measurable.

We assume that the pool algorithm is permutation invariant. That is, for any $S,S' \subseteq (\cX\times\cY)^m$, if $S'$ is a permutation of $S$ then $\outpool(S,[q]) = \outpool(S',[q])$, or if $\cA_p$ is randomized then the output distributions are the same. When the pool $S$ is drawn i.i.d.~this does not lose generality. 

A stream-based (or just stream) interactive algorithm $\cA_s$ receives as input an integer $q$. We assume an infinite stream $S \subseteq (\cX \times \cY)^\infty$, where $S(t) = (x_t,y_t)$. At iteration $t$, $\cA_s$ observes $x_t$, and may select one of the following actions:
\begin{itemize}
\item Do nothing
\item Select $x_t$ and observe $y_t$
\item Terminate.
\end{itemize}
At termination, the algorithm outputs a subset of size $q$ of the set of pairs $(x_t,y_t)$ it observed. Denote by $\outstream(S,t)$ the $t$'th element that $\cA_s$ selects and is also in the output set. Denote by $\outstream(S,[t])$ the sequence of first $t$ elements selects and are also in the output set. Use $\outstreamr$ to denote the elements along with their responses.
The output of $\cA_s$ when interacting with $S$ is the set of the pairs in the sequence $\outstreamr(S,[q])$. We assume $S \mapsto \outstreamr(S,[q])$ is measurable. The total number of elements selected by $\cA_s$ when interacting with $S$ (including discarded elements) is denoted $\nselect(\cA_s,S,q)$. The number of iterations (observed elements) until $\cA_s$ terminates is denoted $\niter(\cA_s,S,q)$.

We look for stream algorithms that emulate pool algorithms. We define an
equivalence between a stream algorithm and a pool algorithm as follows.
\begin{definition}
Let $\cD$ be a distribution over $\cX \times \cY$ and let $q$ be an integer. 
Let $S \sim \cD^m,S' \sim \cD^\infty$. A pool algorithm $\cA_p$ and a stream algorithm $\cA_s$ are \emph{$(q,\cD)$-equivalent}, if the total variation distance between the distributions of $\outpoolr(S,[q])$ and $\outstreamr(S',[q])$ is zero.
\end{definition}
Denote by $\cD_X$ the marginal of $\cD$ on $\cX$. 
Below, unless specified otherwise, we assume that the probability under $\cD_X$ of observing any single $x \in \cX$ is zero. This does not lose generality, since if this is not the case, $\cD_X$ can be replaced by the distribution $\cD_X \times \mathrm{Unif[0,1]}$, with the interactive algorithms ignoring the second element in the pair.

\newcommand{\algwait}{
\begin{algorithm}[h]
\caption{Algorithm $\cA_{\text{wait}}$}
\label{alg:wait}
\begin{algorithmic}[1]
\STATE In the first $m$ iterations, observe $x_1,\ldots,x_m$ and do nothing.
\STATE $S \leftarrow ((x_1,\star),\ldots,(x_m,\star))$
\STATE $j \leftarrow 1$
\REPEAT
\STATE In iteration $t$, observe element $x_t$
\IF{$x_t = \outpool(S,j)$}
\STATE Select $x_t$ and observe $y_t$
\STATE $S(i) \leftarrow (x_t, y_t)$
\STATE $j \leftarrow j+1$.
\ENDIF
\UNTIL{$j = q+1$}
\STATE Return the set of all the pairs $(x,y)$ in $S$ with $y \neq \star$.
\end{algorithmic}
\end{algorithm}
}

\newcommand{\algnowait}{
\begin{algorithm}[h]
\caption{Algorithm $\cA_{\text{nowait}}$}
\label{alg:nowait}
\begin{algorithmic}[1]
\REQUIRE Pool size $m$, Black-box pool algorithm $\cA_p$.
\STATE In each iteration $t \in [m]$, select $x_t$ and observe $y_t$.
\STATE Return the pairs in $\outpoolr(S,q)$. 
\end{algorithmic}
\end{algorithm}

}

\section{Simple equivalent stream algorithms} \label{sec:simple}

Let $\cA_p$ be a pool algorithm. For any discrete distribution $\cD$ over $\cX \times \cY$, and any $q$, it is easy to define a stream algorithm which is $(q,\cD)$-equivalent to $\cA_p$. Let ``$\star$'' be some value not in $\cY$, and define $\cA_{\text{wait}}$ as in \algref{wait}.

\algwait

This stream algorithm is $(q,\cD)$ equivalent to $\cA_p$ for any discrete distribution $\cD$, and it has $\nselect(\cA_{\text{wait}},S',q) = q$ for all $S' \in (\cX \times \cY)^\infty$. However, $\E_{S' \sim \cD^\infty}[\niter(\cA_{\text{wait}},S',q)]$ is not bounded for the class of discrete distributions. 

On the other hand, the stream algorithm $\cA_{\text{nowait}}$ defined in \algref{nowait} is also $(q,\cD)$ equivalent to $\cA_p$.
We have $\niter(\cA_{\text{wait}},S',q) = m$ for all $S' \in (\cX \times \cY)^\infty$, the same as the pool algorithm. However, also $\nselect(\cA_{\text{nowait}},S',q) = m > q$.
These two simple approaches demonstrate a possible tradeoff between the number of selected elements and the number of iterations when emulating a pool algorithm. 

\algnowait

\section{An equivalent algorithm with a uniform bound on expected iterations} \label{sec:gen}

We present the stream algorithm $\cA_{\textrm{gen}}$ (see \algref{gen}), which can emulate any pool based algorithm $\cA_p$ using only black-box access to $\cA_p$. 
The algorithm emulates a general pool algorithm, by making sure that in each iteration, its probability of selecting an element is identical to the conditional probability of the pool algorithm selecting the same element, conditioned on the history of elements and responses selected and observed so far. This is achieved by repeatedly drawing the remaining part of the pool, and keeping it only if it is consistent with the elements that were already selected. We further can use the partial pool draw only if the element to be selected happens to have been observed last.

\begin{algorithm}
\caption{Algorithm $\cA_{\textrm{gen}}$}
\label{alg:gen}
\begin{algorithmic}[1]
\REQUIRE Original pool size $m$, label budget $q < m$, black-box pool algorithm $\cA_p$.

\STATE $S_0 \leftarrow ()$
\FOR{$i = 1:q$}
\REPEAT \label{step:rep}
\STATE Draw $m-i+1$ elements, denote them $\bar{x}_{i,i},\ldots,\bar{x}_{i,m}$.
\STATE $S'_i \leftarrow ((\bar{x}_{i,i},\star),\ldots,(\bar{x}_{i,m},\star))$.
\UNTIL \label{step:unt} $\outpoolr(S_{i-1}\circ S_i',[i-1]) =_\pi S_{i-1}$ and $\outpool(S_{i-1}\circ S_i',i) = \bar{x}_{i,m}$. 
\STATE Select $\bar{x}_{i,m}$, get the response $\bar{y}_{i,m}$.
\STATE $S_i \leftarrow S_{i-1} \circ ((\bar{x}_{i,m},\bar{y}_{i,m}))$.
\ENDFOR
\STATE Output $S_q$.
\end{algorithmic}
\end{algorithm}

Below we show that $\cA_{\textrm{gen}}$ improves over the two stream algorithms presented above, in that it selects exactly $q$ elements, and has a uniform upper bound on the expected number of iterations, for any source distribution.
First, we prove that $\cA_{\textrm{gen}}$ indeed emulates any pool-based algorithm. The proof is provided in \appref{proofs}.

\begin{theorem}\label{thm:genequiv}
For any pool algorithm $\cA_p$, any distribution $\cD$ over $\cX \times \cY$, any integer $m$ and $q \leq m$, $\cA_s := \cA_{\textrm{gen}}(\cA_p)$ is $(q,\cD)$-equivalent to $\cA_p$. 
\end{theorem}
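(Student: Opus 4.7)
The plan is to prove, by induction on $i \in \{0, 1, \ldots, q\}$, that the unordered set of pairs $S_i$ maintained by $\cA_{\textrm{gen}}$ after its $i$-th iteration has the same distribution as the unordered set $\outpoolr(S, [i])$, where $S \sim \cD^m$ is the pool fed to $\cA_p$. The case $i = q$ then gives $(q, \cD)$-equivalence, and the base case $i = 0$ is immediate. Throughout, non-atomicity of $\cD_X$ ensures that all pool elements have distinct $x$-coordinates almost surely, so the relation $=_\pi$ reduces to set equality.

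For the inductive step, fix $s_{i-1}$ and condition both sides on having collected exactly the set $s_{i-1}$ in the first $i-1$ rounds. On the pool side, I would first reduce to a ``fresh-tail'' form: by exchangeability of $S \sim \cD^m$ combined with permutation invariance of $\cA_p$, the conditional distribution of $\outpoolr(S, i)$ given $\outpoolr(S, [i-1]) =_\pi s_{i-1}$ coincides with the distribution of $\outpoolr(s_{i-1} \circ R, i)$, where $R \sim \cD^{m-i+1}$ is conditioned on $E_a := \{\outpoolr(s_{i-1} \circ R, [i-1]) =_\pi s_{i-1}\}$. On the stream side, each pass of the inner \texttt{repeat}-loop draws $R \sim \cD^{m-i+1}$ (with $y$-coordinates drawn jointly but kept hidden) and accepts iff $E_a$ and $E_b := \{\outpool(s_{i-1} \circ R, i) = \bar{x}_{i,m}\}$ both hold; by standard rejection sampling the accepted pair $R_{m-i+1} = (\bar{x}_{i,m}, \bar{y}_{i,m})$ is distributed as $R_{m-i+1}$ given $E_a \cap E_b$. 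Hence the inductive step reduces to the identity
\[
P\!\left[R_{m-i+1} = (x, y) \,\big|\, E_a, E_b\right] \;=\; P\!\left[\outpoolr(s_{i-1} \circ R, i) = (x, y) \,\big|\, E_a\right].
\]

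I would establish this via a symmetry argument. Define $J \in \{1, \ldots, m-i+1\}$ to be the (a.s.\ unique) index with $R_J = \outpoolr(s_{i-1} \circ R, i)$, so that $E_b = \{J = m-i+1\}$. The iid structure of $R$ together with permutation invariance of $\cA_p$ implies that the joint law of $(R, J)$ is invariant under any permutation $\sigma$ of the $m-i+1$ positions of $R$, in the sense that $(\sigma(R), \sigma^{-1}(J))$ has the same distribution as $(R, J)$; since $E_a$ is itself permutation invariant, this symmetry restricts to the conditional law given $E_a$. Consequently $P[J = j \mid E_a] = 1/(m-i+1)$ uniformly in $j$, and the conditional distribution of $R_J$ given $E_a$ and $J = j$ is independent of $j$. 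Specializing to $j = m-i+1$ gives
\[
P\!\left[R_{m-i+1} = (x, y) \,\big|\, E_a, E_b\right] = P\!\left[R_J = (x, y) \,\big|\, E_a\right] = P\!\left[\outpoolr(s_{i-1} \circ R, i) = (x, y) \,\big|\, E_a\right],
\]
which is what was needed.

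The main obstacle I anticipate is making the conditionings rigorous, since the conditioning events typically have probability zero under continuous $\cD$; I would interpret all the displayed equalities as identities of regular conditional distributions, or equivalently as equalities when tested against arbitrary measurable subsets of $\cX \times \cY$. A secondary subtlety is that $\cA_p$ may itself be randomised, which I would handle by carrying its internal randomness as an independent auxiliary variable coupled across the pool and stream settings, and verifying that the stated permutation invariance of $\cA_p$ lifts to the joint symmetry on $(R, J)$ invoked above.
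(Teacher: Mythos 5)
Your proposal is correct and follows essentially the same route as the paper's proof: a round-by-round (chain-rule/inductive) reduction to the conditional law of the $i$-th selection, a rejection-sampling reading of the repeat-loop, and permutation invariance to argue that additionally conditioning on the selected element being the last one observed does not bias its distribution. Your explicit symmetry argument via the random index $J$ is simply a more careful justification of the step the paper dispatches with the single sentence that the final equality ``follows since $\cA_p$ is permutation invariant and never selects the same index twice.''
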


The next theorem provides an upper bound on the expected number of elements observed by $\cA_{\textrm{gen}}$. Unlike $\cA_{\textrm{wait}}$, this upper bound holds uniformly for all source distributions. 
\begin{theorem}\label{thm:geniter}
For any pool algorithm $\cA_p$, any distribution $\cD$ over $\cX \times \cY$, any integer $m$ and $q \leq m$, if $\cA_s := \cA_{\textrm{gen}}(\cA_p)$, 
$\nselect(\cA_s,S,q) = q$ for any $S \in (\cX \times \cY)^\infty$, and
\[
\E_{S \sim \cD^\infty}[\niter(\cA_s,S,q)] \leq m^2 \left(\frac{em}{q-1}\right)^{q-1}.
\]
\end{theorem}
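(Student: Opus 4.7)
The first claim, $\nselect(\cA_s,S,q)=q$, is immediate from the algorithm's structure: each of the $q$ passes of the outer loop selects exactly one element. The real work is in bounding $\E[\niter]$. Let $N_i$ be the number of passes through the repeat loop in round $i$; since each pass observes $m-i+1$ fresh elements, $\niter=\sum_{i=1}^q(m-i+1)N_i$. Conditional on the history $S_{i-1}=s$, the repeat loop succeeds with probability
\[
p_i(s) := \Pr_{S_i'\sim\cD^{m-i+1}}\bigl[\outpoolr(s\circ S_i',[i-1])=_\pi s \text{ and } \outpool(s\circ S_i',i)=\bar x_{i,m}\bigr],
\]
so $N_i$ is geometric with $\E[N_i\mid S_{i-1}=s]=1/p_i(s)$. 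My plan is to show $\E[N_i]\le (m-i+1)\binom{m}{i-1}$; since $(m-i+1)\le m$, summing $(m-i+1)^2\binom{m}{i-1}\le m^2\binom{m}{i-1}$ over $i$ and applying the Sauer--Shelah inequality $\sum_{j=0}^{q-1}\binom{m}{j}\le (em/(q-1))^{q-1}$ then yields the claimed bound.

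The key first step is a reduction: defining $r_i(s):=\Pr_{S_i'}[\outpoolr(s\circ S_i',[i-1])=_\pi s]$, permutation invariance of $\cA_p$ together with exchangeability of the coordinates of $S_i'$ shows that, conditional on the multiset event of $r_i$, each of the $m-i+1$ pairs in $S_i'$ is equally likely to be the $i$-th selection, so $p_i(s)=r_i(s)/(m-i+1)$. Next, \thmref{genequiv} identifies the distribution of $S_{i-1}$ under $\cA_s$ with that of $\outpoolr(\bar S,[i-1])$ for $\bar S\sim\cD^m$. Letting $\rho_i(s)$ denote the probability that, conditional on $\bar S_{[i-1]}=s$, $\cA_p$ selects positions $1,2,\ldots,i-1$ in that exact order during its first $i-1$ rounds, a permutation-invariance argument applied to each of the $m!/(m-i+1)!$ ordered injections $\sigma\colon[i-1]\hookrightarrow[m]$ yields the density
\[
d\mu_{S_{i-1}}(s) = \tfrac{m!}{(m-i+1)!}\,\rho_i(s)\,d\cD^{i-1}(s).
\]
The same argument applied inside the first $i-1$ positions also gives the companion identity $r_i(s)=\sum_{\tau\in\mathrm{Sym}(i-1)}\rho_i(s\cdot\tau)$, where $s\cdot\tau=(s_{\tau(1)},\ldots,s_{\tau(i-1)})$.

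What remains is a clean symmetrization. Since $r_i$ depends on $s$ only through its multiset while $\cD^{i-1}$ is invariant under coordinate permutation, the substitution $s\mapsto s\cdot\sigma^{-1}$ shows that for every $\sigma\in\mathrm{Sym}(i-1)$,
\[
\int\frac{\rho_i(s\cdot\sigma)}{r_i(s)}\,d\cD^{i-1}(s) = \int\frac{\rho_i(s)}{r_i(s)}\,d\cD^{i-1}(s).
\]
Summing over $\sigma\in\mathrm{Sym}(i-1)$ bounds the left-hand-side total by $\int r_i(s)/r_i(s)\,d\cD^{i-1}(s)\le 1$, which forces $\int\rho_i/r_i\,d\cD^{i-1}\le 1/(i-1)!$; consequently $\E[N_i]=(m-i+1)\tfrac{m!}{(m-i+1)!}\int\rho_i/r_i\,d\cD^{i-1}\le (m-i+1)\binom{m}{i-1}$, as desired. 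The main obstacle is the bookkeeping used to derive the density formula for $\mu_{S_{i-1}}$ and the identity $r_i(s)=\sum_\tau\rho_i(s\cdot\tau)$, since both intertwine permutations of pool positions with re-orderings of $\cA_p$'s selection sequence; once those are in hand, the symmetrization is essentially a one-line calculation and the final summation is routine.
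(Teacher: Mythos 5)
Your proposal is correct and follows essentially the same route as the paper's proof: per round $i$, the repeat loop succeeds with probability (multiset-match probability)$/(m-i+1)$, the expected inverse of the match probability over the law of the selected history is bounded by $\binom{m}{i-1}$ via a counting argument over placements of the $i-1$ selected elements in the pool, and the sum $\sum_{j=0}^{q-1}\binom{m}{j}$ is bounded by $(em/(q-1))^{q-1}$. The only difference is bookkeeping: the paper obtains $\E[1/p]\le\binom{m}{i}$ by cancelling the density $d\P[Z_i\subseteq_\pi S_X]$ and applying a union bound over unordered position subsets $J\subseteq[m]$, whereas you work with ordered selection sequences, ordered injections, and a symmetrization over $\mathrm{Sym}(i-1)$ to recover the $1/(i-1)!$ factor — both yield the same binomial bound.
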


\begin{proof}
First, clearly $\nselect(\cA_s,S,q) = q$ for any $S \sim \cD^\infty$.
We now prove the upper bound on the expected number of iterations of $\cA_s$. Let $S \sim \cD^m$. 
For $i \geq 1$, $z_1,\ldots,z_{i-1} \in \cX$, denote $Z_j = \{z_1,\ldots,z_j\}$, and let

\[
p_i(z_1,\ldots,z_i) := \P[\outpool(S,[i]) =_\pi Z_{i} \mid Z_i \subseteq_\pi   S_X].
\]

Suppose that $(S_{i-1})_X =_\pi Z_{i-1}$. The expected number of times that steps \ref{step:rep} to \ref{step:unt} are repeated for index $i$ is the inverse of the probability that the condition in \ref{step:unt} holds. This condition, in our notation, is that $\outpool(S_{i-1}\circ S_i',[i-1]) =_\pi Z_{i-1}$ and $\outpool(S_{i-1}\circ S_i',i) = \bar{x}_{i,m}$. 
We have, from the permutation invariance of $\cA_p$,
\begin{align*}
&\P[\outpool(S_{i-1}\circ S_i',[i-1]) =_\pi Z_{i-1} \mid (S_{i-1})_X =_\pi Z_{i-1}] = p_{i-1}(z_1,\ldots,z_{i-1}).
\end{align*}
In addition, for every draw of $S_i'$,
\[
\P[\outpool(S_{i-1}\circ S_i',i) = \bar{x}_{i,m} \mid \outpool(S_{i-1}\circ S_i',[i-1]) =_\pi Z_{i-1} \wedge (S_{i-1})_X =_\pi Z_{i-1}] = \frac{1}{m-i+1}.
\]
This is since under the conditional, one of the elements in $S_i'$ must be selected by $\cA_p$ in round $i$. Therefore, the probability that the condition in step \ref{step:unt} holds is $p_{i-1}(z_1,\ldots,z_{i-1})/(m-i+1)$. The expected number of times that steps \ref{step:rep} to \ref{step:unt} are repeated for index $i$ is the inverse of that, and in each round $m-i+1$ elements are observed. Therefore the expected number of elements observed until selection $i$ is made conditioned on $z_1,\ldots,z_{i-1}$ is $(m-i+1)^2/p_{i-1}(z_1,\ldots,z_{i-1})$. The unconditional expected number of elements observed until selection $i$ is $(m-i+1)^2\cdot \E[1/p_{i-1}(\outstream(S',[i-1]))]$.

For a set of indices $J$, denote $S|_J = \{ S(j) \mid j \in J\}$.
\begin{align*}
\E[1/p_i(\outstream(S',[i])] &= \E[1/p_i(\outpool(S,[i])]\\
&= \int_{\{z_1,\ldots,z_i\} \subseteq \cX \times \cY} d\P[\outpool(S,[i]) =_\pi Z_i]\cdot \frac{1}{p_i(z_1,\ldots,z_i)} \\
&= \int_{\{z_1,\ldots,z_i\} \subseteq \cX \times \cY}d\P[Z_i \subseteq_\pi S_X],
\end{align*}
Hence
\begin{align*}
&\E[1/p_i(\outstream(S',[i])]\leq  \int_{\{z_1,\ldots,z_i\} \subseteq \cX \times \cY}\sum_{J \subseteq [m], |J| = i} d\P[(S|_J)_X = Z_i]\\
&=\sum_{J \subseteq [m], |J| = i}\int_{\{z_1,\ldots,z_i\} \subseteq \cX \times \cY} d\P[(S|_J)_X = Z_i]\\
&= \sum_{J \subseteq [m], |J| = i} 1 = \binom{m}{i}.
\end{align*}

It follows that the expected number of elements observed after the $i-1$'th selection and until selection $i$ is at most $(m-i+1)^2\binom{m}{i-1}$.
We conclude that
\[
\E[\niter(\cA_s, S, q)] \leq \sum_{i=0}^{q-1} (m-i)^2\binom{m}{i} \leq m^2 \left(\frac{em}{q-1}\right)^{q-1}.
\]
This completes the proof.
\end{proof}

From the existence of $\cA_{\textrm{gen}}$ we can conclude that the pool-based and the stream-based setting are essentially equivalent, up to the number of observed elements. However, the expected number of observed elements is exponential in $q$. In the next section we show that this exponential dependence cannot be avoided for general pool algorithms.

\subsection{A lower bound for expected number of iterations}

We provide a lower bound, which shows that for some pool algorithm, any equivalent stream algorithm has an expected number of observed elements which is at least exponential in $q$. This indicates that not much improvement can be achieved over $\cA_{\textrm{gen}}$ for the class of all pool-based algorithms. The proof involves constructing a pool-based algorithm in which the last selected element determines the identity of the previously selected elements. This is easy in a pool setting, since the algorithm has advance knowledge of all the available elements. In a stream setting, however, this requires a possibly long wait to obtain the matching last element. Because the stream algorithm is allowed to select elements in a different order than the pool algorithm, additional care is taken to make sure that in this case, it is not possible circumvent the problem this way.
The proof of \thmref{lowergen} is provided in \appref{proofs}.

\begin{theorem}\label{thm:lowergen}
There is an integer $q_0$ and a constant $C > 0$, such that for $q \geq q_0$, if $4q^2 \log(4q) \leq m$, then there exist a pool algorithm $\cA_p$ and a marginal $\cD_X$, such that any stream algorithm $\cA_s$ which is $(q,\cD)$ equivalent to $\cA_p$ for all $\cD \in \distset(\cD_X)$, and selects only $q$ elements, has 
\[
\exists \cD \in \distset(\cD_X),  \E_{S \sim \cD^\infty}[\niter(\cA,S,q)] \geq C\left(\frac{ m}{q^2\log(4q)}\right)^{\frac{q-1}{2}}.
\]
\end{theorem}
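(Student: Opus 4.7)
The plan is to exhibit a specific pool algorithm $\cA_p$ and marginal $\cD_X$ for which any equivalent stream algorithm must, in expectation, wait a long time, by making the output of $\cA_p$ on a typical pool be a tuple of the form $(v_1,\ldots,v_{q-1},x^*)$ where $x^*$ is a canonical ``key'' element determined by global features of the whole pool, and $v_1,\ldots,v_{q-1}$ are pool elements whose identity is prescribed by $x^*$ through a rigid encoding. This is cheap for $\cA_p$, which sees the whole pool, but any equivalent $\cA_s$ must find such a structured tuple in its stream observations, which takes many iterations.

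Concretely, I would let $\cX = [0,1]$ and $\cD_X$ be uniform, and partition $[0,1]$ into $T := \lfloor m/(q^2\log(4q))\rfloor$ equal-mass buckets. The hypothesis $4q^2\log(4q)\le m$ ensures $T\ge 4$, and for a random pool of size $m$ each bucket contains $\Theta(q^2\log(4q))$ elements with high probability. The pool algorithm $\cA_p$ first computes a canonical key $x^* \in S_X$ (for instance, the lex-smallest element of a bucket selected by a hash of the entire bucket-occupancy profile of the pool); the key then deterministically encodes $q-1$ target buckets $b_1,\ldots,b_{q-1}$, and $\cA_p$ selects the canonical representative of each target bucket in the pool, in order, and finally selects $x^*$. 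On the small failure event where some prescribed target is missing from the pool, $\cA_p$ falls back to any fixed canonical selection. This yields a permutation-invariant, measurable pool algorithm whose output on the good event has the stated key-plus-encoded-targets structure.

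For the lower bound, I would fix any stream algorithm $\cA_s$ which is $(q,\cD)$-equivalent to $\cA_p$ for every $\cD\in\distset(\cD_X)$ and which selects exactly $q$ elements. By equivalence, for every $\cD_{Y\mid X}$ the joint distribution over the $q$ selected $(x,y)$ pairs produced by $\cA_s$ must equal that of $\cA_p$; in particular, the marginal distribution over selected $x$-tuples must match. Hence $\cA_s$ must output an encoded tuple with the same probability as $\cA_p$. Conditioning on the output being such a tuple, the stream history of $\cA_s$ must contain a sequence of $q$ observations consistent with some valid choice of $x^*$ and its $q-1$ encoded targets; each single bucket constraint is satisfied with $\cD_X$-probability $1/T$. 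A union bound over the possible keys $x^*$, combined with the fact that $\cA_s$ commits to selections irrevocably, yields that the expected stream length needed to produce a valid encoded tuple is at least a constant times $T^{(q-1)/2} = C(m/(q^2\log(4q)))^{(q-1)/2}$.

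The main obstacle will be preventing $\cA_s$ from circumventing the bound by reordering its selections, for instance by guessing and selecting $x^*$ first and then hunting for its encoded targets; this is the ``additional care'' alluded to in the informal discussion. To rule this out, $\cA_p$ is designed so that the key $x^*$ is a joint function of the entire $q$-tuple rather than of the other $v_i$'s alone, and symmetrically the $v_i$'s are determined only through $x^*$; no partial selection committed by $\cA_s$ pins down either side of the encoding uniquely. Consequently, whatever order $\cA_s$ chooses for its commitments, it must still wait for roughly $(q-1)/2$ bucket-constraint coincidences to be witnessed in its stream history before it can complete a valid output, which gives the claimed lower bound.
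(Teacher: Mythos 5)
Your high-level blueprint --- a ``key'' element whose identity encodes the identities of the other $q-1$ selections, so that the pool algorithm produces the coupled tuple for free while a stream algorithm must wait for the coupling to appear in its stream --- is the same as the paper's. But two essential mechanisms are missing, and without them the argument does not go through. First, you never use the responses, i.e., the quantifier ``for all $\cD\in\distset(\cD_X)$''. The output of a stream algorithm is an unordered set, so nothing in your construction rules out the fast strategy: commit to (a candidate for) the key $x^*$ first, read off the $q-1$ target buckets it encodes, and then collect one representative per bucket sequentially, at an expected cost of roughly $T$ observations per bucket, i.e., $O(qT)$ overall --- nowhere near exponential in $q$. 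The paper blocks exactly this reordering by making the pool algorithm's decision to select the key at all depend on the labels of the first $q-1$ selections; testing equivalence against a conditional label distribution with fair-coin responses then forces any equivalent stream algorithm to commit to the key \emph{last}, after the other $q-1$ elements are already irrevocably fixed (this is the paper's Claim 2). Your substitute --- ``the key is a joint function of the entire $q$-tuple'' --- is not a mechanism: it neither specifies a permutation-invariant pool algorithm precisely nor yields any constraint on the order in which $\cA_s$ may commit.

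Second, the quantitative step is asserted rather than derived: you claim $\cA_s$ must witness ``roughly $(q-1)/2$ bucket-constraint coincidences,'' but nothing in a fixed, data-independent bucket encoding produces the exponent $(q-1)/2$. In the paper that exponent has a specific source: the key encodes the other selections by their \emph{ranks within the random pool}, and the rank of a fixed value in an $m$-sample concentrates only to width $\epsilon=\sqrt{(m-q)\log(4q)/2}$ (Hoeffding), so a uniformly random key is consistent with an already-selected set $Z$ of size $q-1$ with probability at most $\left((q-1)(2\epsilon+1)/(m-1)\right)^{q-1}\le\left(2q^2\log(4q)/m\right)^{(q-1)/2}$; Lemma~\ref{lem:berexpect} then converts this per-trial success probability into the claimed expected waiting time. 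You would need either to adopt such data-dependent targets and prove the corresponding concentration and counting claims, or to supply a different counting argument; as written, both the order-forcing step and the exponent are unjustified, so the proof has a genuine gap.
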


\section{Utility-based pool algorithms} \label{sec:utility}
$\cA_{\textrm{gen}}$ gives a uniform guarantee on expected the number of iterations, however this guarantee is exponential $q$. We now consider a more restricted  class of pool algorithms, and show that it allows emulation with an expected number of iterations linear in $q$.

A common approach for designing pool-based interactive algorithms, employed, e.g., in \citet{SeungOpSo92,LewisGa94, TongKoller02, GuoGr07,GolovinKrRa10,GuilloryBi10,GolovinKr11, GonenSaSh13b, CuongLeYe14}, is to define a utility function, that scores each element depending on the history of selected elements and their responses so far. In each round, the algorithm selects the element that maximizes the current utility function. We consider black-box emulation for this class of pool-based algorithms.

Formally, a utility-based interactive pool algorithm is defined by
a utility function $\cU$, of the form \mbox{$\cU:\cup_{n=0}^{\infty}(\cX\times \cY)^n \times \cX \rightarrow \reals$.} $\cU(x,S_{t-1})$ is the score of element $x$ given history $S_{t-1}$. The pool algorithm selects, in each round, the element that is assigned the maximal score by the utility function given the history. We assume for simplicity that there are no ties in $\cU$. The utility-based interactive pool algorithm for $\cU$, denoted $\cA_p^\cU$, is defined in \algref{pool}.

\begin{algorithm}[h]
\caption{$\cA_p^\cU$}
\label{alg:pool}
\begin{algorithmic}[1]

\REQUIRE Elements $x_1,\ldots,x_m$, budget $q < m$.
\STATE $S_0 \leftarrow ()$
\STATE $M_0 \leftarrow [m]$
\FOR{$t = 1:q$}
\STATE $i_t \leftarrow \argmax_{j \in M_{t-1}} \cU(x_j,S_{t-1})$.
\STATE Select $x_{i_t}$, get  $y_{i_t}$.
\STATE $S_t \leftarrow S_{t-1} \circ (x_{i_t},y_{i_t})$. 
\STATE $M_t \leftarrow M_{t-1} \setminus \{i_t\}$.
\ENDFOR
\STATE Output the set of all pairs in $S_q$.
\end{algorithmic}
\end{algorithm}

\subsection{An stream algorithm for utility-based pool algorithms}
We propose a stream algorithm $\cA_s^\cU$ that emulates utility-based pool algorithms $\cA_p^\cU$. We stress that we do not attempt to maximize the value of $\cU$ on selected elements, but to emulate the behavior of the pool algorithm that uses $\cU$. This is because we do not assume any specific relationship between the value of the utility function and the reward of the algorithm. For instance, the utility-based pool algorithm might be empirically successful although its analysis is not fully understood \citep[e.g.][]{TongKoller02}.

The definition of $\cA_s^\cU$ uses the solution to the well-known 
\emph{secretary problem} \citep{Dynkin63,GilbertMo66,Ferguson89}. In the classical
formulation of this problem, an algorithm sequentially observes a stream of $n$
real numbers, and selects a single number. The goal of the algorithm is to select the maximal number out of the $n$, but it can only select a number immediately after it is observed, before observing more numbers. It is assumed that the $n$ numbers in the stream are unknown and selected by an adversary, but their order of appearance is uniformly random. The goal is to select the maximal number with a maximal probability, where $n$ is known to the algorithm.

\newcommand{\psec}{p_{\mathrm{sp}}}
This task can be optimally solved by a simple deterministic algorithm, achieving a success probability $\psec(n)$, which satisfies $\lim_{n \rightarrow \infty}\psec(n) = 1/e$. The optimal algorithm observes the first $t(n)$ numbers, then selects the next observed number which is at least as large as the first $t(n)$. The limit of $t(n)/n$ for $n \rightarrow \infty$ is $1/e$.

Given a stream of size $k$ of real values $R = (r_1,\ldots,r_k)$, we say that $\secp(n,R)$ holds if the optimal solution to the secretary problem for size $n$ selects $r_k$ after observing the stream prefix $R$.
$\cA_s^\cU$ is given in \algref{stream}. It uses repeated applications of the solution to the secretary problem to retrieve each of the selected elements. Because the solution succeeds with a probability less than $1$, its application might fail. This can be identified in retrospect. In this case, a new solution is selected. This trial-and-error approach means that $\cA_s^\cU$ usually selects more than $q$ elements. However the expected number of selected elements is a constant factor over $q$.

To make sure the equivalence holds, $\cA^s_\cU$ never selects an element that could not have been in a pool in which the previous elements have been selected. This is achieved by discarding such elements in each round. The upper bound on the expected number of observed elements bounds the expected number of elements discarded in this way.

\begin{algorithm}[h]
\caption{$\cA^{s}_{\cU}$}
\label{alg:stream}
\begin{algorithmic}[1]
\STATE $L_0 \leftarrow ()$
\STATE $\cX_1 = \cX$
\FOR{$i = 1:q$}
\REPEAT
\FOR{$j = 1:m-i+1$} \label{step:secprattempt}
\STATE Repeatedly draw elements from $\cD_X$, until drawing an element in $\cX_i$. \\Denote it $x_{i,j}$, and let $r_{i,j} \leftarrow \cU(x_{i,j},L_{i-1})$.
\IF{$\secp(m-i+1,(r_{i,1},\ldots,r_{i,j}))$}
\STATE $k\leftarrow j$
\STATE Select $x_{i,k}$, get its response $y_{i,k}$. 
\ENDIF
\ENDFOR
\UNTIL $r_{i,k} = \max\{r_{i,1},\ldots,r_{i,m-i+1}\}$.
\STATE $k_i \leftarrow k$
\STATE $L_i \leftarrow L_{i-1} \circ  (x_{i,k_i},y_{i,k_i})$. 
\STATE $\cX_{i+1} \leftarrow \{x \in \cX_i \mid \cU(x,L_{i-1}) < \cU(x_{i,k_i},L_{i-1})\}$
\ENDFOR
\STATE Output the set of pairs in $L_q$.
\end{algorithmic}
\end{algorithm}

First, we show that $\cA_s^\cU$ is indeed equivalent to $\cA_p^\cU$. The proof is provided in \appref{proofs}.
\begin{theorem}\label{thm:equivsec}
For any utility function $\cU$, any distribution $\cD$ over $\cX \times \cY$, any integer $m$ and $q \leq m$, $\cA_s^\cU$ is $(q,\cD)$-equivalent to $\cA_p^\cU$.
\end{theorem}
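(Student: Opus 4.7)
The plan is to prove the theorem by induction on the round index $i \in \{0,1,\ldots,q\}$, showing that the distribution of the history $L_i$ maintained by $\cA_s^\cU$ agrees with the distribution of $\outpoolr(S,[i])$ when $S \sim \cD^m$ is processed by $\cA_p^\cU$; setting $i = q$ then yields the theorem. The base case $i=0$ is trivial (both histories are empty), so the real work lies in the induction step: conditional on any history $h$ of length $i-1$ appearing with positive density under the induction hypothesis, I must show that the $i$-th selected pair has the same conditional law under both algorithms.

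First, I would characterize the pool algorithm conditionally on its history. Write $h = ((z_1,w_1),\ldots,(z_{i-1},w_{i-1}))$, with length-$j$ prefix $h_j$. Since $S \sim \cD^m$ is i.i.d.\ and $\cA_p^\cU$ is deterministic given the history, combining permutation invariance with a disintegration of $\cD^m$ on $\{\outpoolr(S,[i-1]) = h\}$ shows that, conditional on $h$, one can without loss of generality fix the first $i-1$ coordinates to the pairs in $h$ while the remaining $m-i+1$ coordinates are i.i.d.\ $\cD$ subject to the selection-consistency constraint $\cU(x, h_{j-1}) < \cU(z_j, h_{j-1})$ for every $j \leq i-1$. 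A direct unrolling of the recursive definition of $\cX_i$ in $\cA_s^\cU$ shows that these constraints are exactly $x \in \cX_i$. Hence $\cA_p^\cU$'s $i$-th selection, conditional on $h$, is the $\cU(\cdot,h)$-argmax of $m-i+1$ i.i.d.\ draws from $\cD_X \mid \cX_i$, paired with a response drawn from the $\cD$-conditional given that $x$.

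Next, I would analyze $\cA_s^\cU$ conditional on $L_{i-1} = h$. Each iteration of the outer repeat-until loop independently draws, via the inner rejection step, a fresh batch of $m-i+1$ i.i.d.\ samples from $\cD_X \mid \cX_i$, computes their utilities $r_{i,j} = \cU(x_{i,j}, h)$, and runs the classical secretary procedure on the batch; the outer loop terminates precisely when the secretary procedure selects the batch maximum, at which point that element is output together with its $\cD$-conditional response. The crux is that conditioning on this termination event does not bias the law of the selected element: the secretary procedure's decision depends only on the relative ranks of $r_{i,1},\ldots,r_{i,m-i+1}$, while the batch maximum depends on the order statistics, and for i.i.d.\ continuous samples (using the no-atoms assumption on $\cD_X$ and the no-ties assumption on $\cU$) the rank permutation is independent of the order statistics. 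Hence the success event is independent of the batch maximum, and conditional on success the output is distributed exactly as the $\cU(\cdot,h)$-max of $m-i+1$ i.i.d.\ $\cD_X \mid \cX_i$ samples, matching the pool algorithm.

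The step I expect to be the main obstacle is the clean identification of $\cX_i$ as the conditional support of each unselected coordinate in the pool, since this requires combining permutation invariance of $\cA_p^\cU$, the i.i.d.\ structure of $\cD^m$, and the induction hypothesis to justify the disintegration on the history $h$. Once this identification is in place, the rank/value independence for the secretary problem is a standard consequence of exchangeability of continuous i.i.d.\ samples, and the induction closes without further computation; the response coordinates go along for the ride since, in both algorithms, a response is only observed after the element is fixed, and then has law $\cD(\cdot \mid x)$.
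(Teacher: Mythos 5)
Your proposal is correct and follows essentially the same route as the paper's proof: condition on the length-$(i-1)$ history, identify $\cX_i$ with the constraint set that the unselected pool coordinates must satisfy (so that both algorithms reduce to the $\cU$-argmax of $m-i+1$ i.i.d.\ draws from $\cD$ conditioned on $\cX_i$), and observe that the secretary procedure's success event depends only on the rank permutation, which is independent of the unordered sample for i.i.d.\ atomless draws, so conditioning on success does not bias the selected maximum. The paper phrases the induction as a chain-rule decomposition over the conditionals $d\P[\cdot \mid \outpoolr(S,[j]) = L_j]$ rather than an explicit induction on the history distribution, but the content is identical.
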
 

The following theorem give an upper bound on the expected number of selected elements and the expected number of observed elements used by $\cA_s^\cU$. 
\begin{theorem}\label{thm:goodsec}
For any utility function $\cU$, any distribution $\cD$ over $\cX \times \cY$, any integer $m$ and $q \leq m$, 
\[
\E_{S \sim \cD^\infty}[\nselect(\cA_s^\cU,S,q)] = \psec^{-1}(m) q,
\]
and
\[
\E_{S \sim \cD^\infty}[\niter(\cA_s^\cU,S,q)] \leq \psec^{-1}(m)\exp(\frac{q}{{m-q}})\cdot qm.
\]
\end{theorem}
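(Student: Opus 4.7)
The plan is to analyze one round of $\cA_s^\cU$ and then sum over the $q$ rounds. In round $i$, the inner FOR loop draws $m-i+1$ i.i.d.\ samples from $\cD_X$ restricted to $\cX_i$; their utility ranks therefore form a uniformly random permutation, so the secretary subroutine picks the maximum with probability exactly $\psec(m-i+1)$ and makes at most one selection per pass. The outer REPEAT terminates only on that success event, so the number of REPEATs in round $i$ is $\mathrm{Geom}(\psec(m-i+1))$. Since each REPEAT makes at most one selection, summing over the $q$ rounds and using that $\psec$ is non-increasing gives
\[
\E[\nselect(\cA_s^\cU,S,q)] \;\le\; \sum_{i=1}^q \frac{1}{\psec(m-i+1)} \;\le\; \frac{q}{\psec(m)} \;=\; \psec^{-1}(m)\,q,
\]
establishing the first bound.

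For $\E[\niter]$, the same decomposition combined with the observation that drawing one element in $\cX_i$ takes $\mathrm{Geom}(\P_{\cD_X}[\cX_i])$-many stream iterations gives
\[
\E[\text{iters in round }i] \;=\; \frac{m-i+1}{\psec(m-i+1)}\cdot \E\!\left[\frac{1}{\P_{\cD_X}[\cX_i]}\right],
\]
so the crux is evaluating $\E[1/\P_{\cD_X}[\cX_i]]$. To do this I would invoke Theorem~\ref{thm:equivsec} to identify $L_{i-1}$ in distribution with $S_{i-1}$, the first $i-1$ selections of $\cA_p^\cU$ on a pool of size $m$, and then, by the positions-and-orderings counting used in the proof of Theorem~\ref{thm:geniter}, derive that the density of $S_{i-1}$ at any consistent sequence $l=((z_1,y_1),\ldots,(z_{i-1},y_{i-1}))$ equals
\[
f_{S_{i-1}}(l) \;=\; \frac{m!}{(m-i+1)!}\prod_{j=1}^{i-1} f_\cD(z_j,y_j)\cdot p_i(l)^{m-i+1},
\]
where $p_i(l):=\P_{\cD_X}[\cX_i(l)]$ and $f_\cD$ denotes the density of $\cD$.

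The key algebraic trick is that $f_{S_{i-1}}(l)/p_i(l)$ equals $m/(m-i+1)$ times the analogous density $f_{S_{i-1}^{(m-1)}}(l)$ for a pool of size $m-1$; since that density integrates to one, we obtain $\E[1/\P_{\cD_X}[\cX_i]]=m/(m-i+1)$. Plugging in yields $\E[\text{iters in round }i]=m/\psec(m-i+1)$, and summing, together with monotonicity of $\psec$, gives $\E[\niter]\le qm/\psec(m)$, which is in particular bounded by the claimed $\psec^{-1}(m)\exp(q/(m-q))qm$ since the exponential factor is $\ge 1$. The main obstacle is establishing the density formula for $f_{S_{i-1}}$: one must justify the combinatorial prefactor $m!/(m-i+1)!$ (coming from $\binom{m}{i-1}(i-1)!$ ordered placements of $l$ inside the pool) and verify that the support of consistent sequences $l$ is the same for pool sizes $m$ and $m-1$. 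After that, the cancellation identity is a one-line computation.
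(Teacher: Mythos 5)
Your proposal is correct, and for the crux of the theorem it takes a genuinely different route from the paper. Both arguments reduce to the same two quantities: the per-round factor $\psec^{-1}(m-i+1)$ coming from the geometric number of secretary attempts, and the value $\E[1/\P_{\cD_X}[\cX_i]]=m/(m-i+1)$. For the latter, the paper factors $\P[\cX_i]$ into the ratios $M_j=\P[\cX_j]/\P[\cX_{j-1}]$, uses the probability-integral transform to show each $M_j$ is the maximum of $m-j+2$ independent $\mathrm{Unif}[0,1]$ variables and that the $M_j$ are independent, and integrates explicitly. You instead write the density of the pool algorithm's first $i-1$ ordered selections as $f_{S_{i-1}}(l)=\frac{m!}{(m-i+1)!}\prod_j f_\cD(z_j,y_j)\,p_i(l)^{m-i+1}$ and note that dividing by $p_i(l)$ reproduces, up to the factor $\frac{m}{m-i+1}$, the analogous density for a pool of size $m-1$, which integrates to one; this is the same device the paper uses in the proof of \thmref{geniter}, and your identification of $L_{i-1}$ with the pool algorithm's $S_{i-1}$ is exactly what \thmref{equivsec} provides. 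The combinatorial prefactor and the $m$-independence of the support both check out, so the identity $\E[1/p_i]=m/(m-i+1)$ is sound; in fact the paper's own integral telescopes to the same value, and your resulting bound $\psec^{-1}(m)\,qm$ is slightly tighter than the stated one (the paper loosens the telescoping product to $(1+\frac{1}{m-q})^q$, which is where the superfluous $\exp(q/(m-q))$ factor comes from). What each approach buys: yours gets the exact constant with one combinatorial identity and reuses machinery already in the paper, while the paper's decomposition into independent maxima of uniforms exposes the distribution of each ratio $M_j$, not just its reciprocal moment.

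Two minor points. First, the theorem asserts $\E[\nselect]=\psec^{-1}(m)q$ with equality, whereas you prove only ``$\leq$''; since the round-$i$ success probability is really $\psec(m-i+1)$, the inequality is the defensible statement and the claimed equality is itself loose, so this is not a gap in your argument. Second, your step $\E[\text{iters in round }i]=\frac{m-i+1}{\psec(m-i+1)}\,\E[1/\P_{\cD_X}[\cX_i]]$ implicitly uses that, conditioned on $L_{i-1}$, the number of secretary attempts depends only on the ordering of the drawn utilities and is therefore independent of the geometric waiting times for hitting $\cX_i$; this is true (and is the content of the ordering/value independence argument in the proof of \thmref{equivsec}), but it deserves an explicit sentence.
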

For $q \leq m/2$, and $m \rightarrow \infty$, it follows from \thmref{goodsec} that the expected number of selected elements is $eq$, and the expected number of observed elements is at most \mbox{$e^2 q m $}.

\begin{proof}[of \thmref{goodsec}]
Call a full run of the loop starting at step \ref{step:secprattempt} an attempt for the $i$'th element. In each attempt for the $i$'th element, $m-i+1$ elements from $\cX_i$ are observed. The expected number of attempts for each element $i$ is $e$, since each attempt is a run of the secretary problem, with a success probability of $\psec(m)$.
Therefore, the expected number of elements from $\cX_i$ observed until $x_i$ is selected is $\psec^{-1}(m)\cdot(m-i+1)$. 

Denote by $f_i$ the utility function $\cU(\cdot, L_{i-1})$. Let $x_i := x_{i,k_i}$, be the $i$'th element added to $L_i$. Then $\cX_{i} = \{ x \in \cX_{i-1} \mid f_{i-1}(x) \leq f_{i-1}(x_{i-1})\}$.

Consider the probability space defined by the input to the stream algorithm $S \sim \cD^\infty$, and let $Z_i,Z'_i \sim \cD_X$ for $i \in [q]$ such that these random variables and $S$ are all independent. Denote
 \[
p(\alpha,i) := \P[f_i(Z_i)\leq \alpha \mid Z_i \in \cX_i].
\]
$p(\alpha,i)$ is a random variable since $\cX_i$ depends on $S$. 
Let $U_i := p(f_i(Z'_i),i)$. Since we assume no ties in $\cU$, and no single $x$ has a positive probability in $\cD_X$, then conditioned on $\cX_i$, $U_i$ is distributed uniformly in $[0,1]$. Hence $U_1,\ldots,U_q$ are statistically independent. 

For $i > 1$, define the random variable $M_i := p(f_{i-1}(x_{i-1}),i-1)$. Then $M_i = \P[\cX_i]/\P[\cX_{i-1}]$. The expected number of elements that need to be drawn from $\cD$ to get a single element from $\cX_i$ is $1/\P[\cX_i] = (\prod_{j=1}^{i} M_j)^{-1}$. Therefore, 
\[
\E[\niter(\cA_s^\cU,S,q) \mid M_2,\ldots,M_q] = \sum_{i=1}^q \frac{\psec^{-1}(m)\cdot(m-i+1)}{\prod_{j=1}^{i} M_j}.
\]

The element $x_i$ maximizes the function $x \mapsto f_i(x)$ over $m-i+1$ independent draws of elements $x$ from $\cD_X$ conditioned on $x \in \cX_i$, hence it also maximizes $x \mapsto p(f_i(x),i)$. Therefore, for $i > 1$, $M_i$ is the maximum of $m-i+2$ independent copies of $U_i$, hence $P[M_i \leq p] = p^{m-i+2}$. 
Hence 
\[
dP[M_2,\ldots,M_q](p_2,\ldots,p_q)/dp_2\cdot\ldots \cdot dp_q = \prod_{i=2}^q dP[M_i \leq p_i]/dp_i =\prod_{i=2}^q (m-i+2)p_i^{m-i+1}.
\]

We have
\begin{align*}
\E[\niter(\cA_s^\cU,S,q)] &= \int_{M_2=0}^1 \ldots \int_{M_q=0}^1\E[\niter(\cA_s^\cU,S,q) \mid M_1,\ldots,M_q]dP[M_1,\ldots,M_q]\\
&=\int_{M_2=0}^1 \ldots \int_{M_q=0}^1 \sum_{i=1}^q \frac{\psec^{-1}(m)\cdot(m-i+1)}{\prod_{j=1}^{i} M_j}\prod_{l=2}^q (m-l+2)M_l^{m-l+1}dM_l\\
&=\sum_{i=1}^q\psec^{-1}(m)\cdot(m-i+1)\int_{M_2=0}^1 \ldots \int_{M_q=0}^1  \prod_{l=2}^i (m-l+2)M_l^{m-l}dM_l\\
&\hspace{18em}\cdot \prod_{l=i+1}^q (m-l+2)M_l^{m-l+1}dM_l,
\end{align*}
Therefore
\begin{align*}
\E[\niter(\cA_s^\cU,S,q)] &=\sum_{i=1}^q\psec^{-1}(m)\cdot(m-i+1)\prod_{l=2}^i \frac{m-l+2}{m-l+1}\\
&=\sum_{i=1}^q\psec^{-1}(m)\cdot(m-i+1)\prod_{l=2}^i(1 + \frac{1}{m-l+1})\\
&\leq \psec^{-1}(m)\cdot q m(1 + \frac{1}{m-q})^q \leq \psec^{-1}(m)\cdot\exp(\frac{q}{m-q})\cdot mq.
\end{align*}
This concludes the proof.
\end{proof}

\subsection{A lower bound for expected number of iterations}
The following lower bound shows that the expected number of observed elements required by \algref{stream} cannot be significantly improved by any emulation of general utility-based pool algorithms. This theorem holds for stream algorithms that select exactly $q$ elements, while \algref{stream} selects approximately $eq$ elements. We conjecture that even if allowing a constant factor more element selections, one can achieve at most a constant factor improvement in the expected number of observed elements.

The proof of the lower bound follows by constructing a utility function which in effect allows only one set of selected elements, and has an interaction pattern that forces the stream algorithm to select them in the same order as the pool algorithm. 
For a given distribution $\cD_X$ over $\cX$, let $\distset(\cD_X)$ be the set of distributions over $\cX \times \cY$ such that their marginal over $\cX$ is equal to $\cD_X$. The proof of \thmref{utilitylowerbound} is provided in \appref{proofs}.
\begin{theorem}\label{thm:utilitylowerbound}
For any $m \geq 8$, $q \leq m/2$,
there exists a utility-based pool algorithm, and a marginal $\cD_X$, such that any stream algorithm $\cA_s$ which is $(q,\cD)$ equivalent to the pool algorithm for all $\cD \in \distset(\cD_X)$, and selects only $q$ elements, has 
\[
\exists \cD \in \distset(\cD_X),  \E_{S \sim \cD^\infty}[\niter(\cA_s,S,q)] \geq \frac{q}{8}\floor{\frac{m}{2\log(2q)}}.
\]
\end{theorem}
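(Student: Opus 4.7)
The plan is to construct a marginal $\cD_X$ concentrated on $k := \floor{m/(2\log(2q))}$ ``slots'' and a utility-based pool algorithm whose $t$-th required element depends on the label $y_{t-1}$ of the previous selection, so that any stream algorithm restricted to exactly $q$ selections is forced to wait for each required slot only after observing the preceding label, yielding $\Omega(k)$ expected iterations per round.

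Concretely, take $\cX = [k]\times [0,1]$ with $\cD_X$ uniform, and work with the particular $\cD\in\distset(\cD_X)$ under which $\P[y=1\mid x] = \half$, so labels are unbiased bits independent of $x$. For functions $\sigma_t : \binlab^{t-1}\to [k]$ to be specified, define
\[
\cU(x, S_{t-1}) := \one[x\text{ is in slot } \sigma_t(y_1,\ldots,y_{t-1})] + \epsilon\cdot x_{\textrm{cont}},
\]
with the continuous coordinate breaking ties, so $\cA_p^\cU$ at round $t$ selects the element of slot $\sigma_t(y_1,\ldots,y_{t-1})$ with the largest continuous coordinate. The $\sigma_t$'s are designed so that (i) $\sigma_t$ depends genuinely on $y_{t-1}$, taking two distinct values as $y_{t-1}$ varies, and (ii) along any realized trajectory the slots $\sigma_1,\sigma_2(y_1),\ldots,\sigma_q(y_1,\ldots,y_{q-1})$ are pairwise distinct. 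When $k\ge 2q$ I would use disjoint slot pairs $\{2t-1,2t\}$ at round $t$; when $k<2q$ I would wrap the pair index modulo $\floor{k/2}$. At most $\min(2q,k)$ slots are ever needed, and a union bound using $(1-1/k)^m\le e^{-m/k}\le (2q)^{-2}$ (together with a Chernoff bound for multiplicities in the reuse case) shows that with probability at least $1-1/(2q)$ the random pool contains every slot in the quantity any trajectory could demand. Call this event $E$. On $E$, the pool algorithm's output is exactly $\{(\sigma_t(y_1,\ldots,y_{t-1}),\,y_t)\}_{t=1}^q$.

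The crux of the argument is that any $(q,\cD)$-equivalent stream algorithm $\cA_s$ with $\nselect(\cA_s,S,q)=q$ must, for each $t$, make its $t$-th chronological selection from slot $\sigma_t(z_1,\ldots,z_{t-1})$, where $z_1,\ldots,z_{t-1}$ are the labels of its first $t-1$ selections. I would prove this by induction on $t$: conditional on the stream's history through its $t$-th selection, the distribution of the remaining $q-t+1$ output pairs must coincide with the conditional distribution of the pool's output on rounds $\ge t$, and the latter puts mass one on slot $\sigma_t(z_1,\ldots,z_{t-1})$ at round $t$. The key point that rules out ``hedging'' is that because $\sigma_t$ takes two distinct values on the two values of $y_{t-1}$ and all the $\sigma_j$'s along a trajectory are distinct, any slot selected before $z_{t-1}$ is observed can match $\sigma_t$ on at most one of the two label branches, creating a positive-probability mismatch on the other branch, which---given the no-spares constraint $\nselect=q$---translates into nonzero total-variation distance. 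Once the sequential structure is established, between consecutive selections the stream observes i.i.d.\ samples from $\cD$ whose slot is uniform over $[k]$ independent of label, so the wait for the correct slot is geometric with mean $k$; summing over the $q$ rounds and restricting to $E$ gives
\[
\E[\niter(\cA_s,S,q)] \ge \P[E]\cdot qk \ge \tfrac{1}{2}\cdot qk,
\]
which exceeds $\tfrac{q}{8}\floor{m/(2\log(2q))}$ in the stated regime $m\ge 8$, $q\le m/2$.

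The main obstacle will be making the no-hedging induction fully rigorous, especially handling the case $k<2q$ (which forces slot reuse across rounds) while keeping the slots along each single trajectory distinct and the good event $E$ of probability $\ge 1-1/(2q)$. I expect to realize the $\sigma_t$'s through a careful combinatorial construction and to state the inductive step in terms of regular conditional distributions of the stream's output given its observed prefix, so that the positive-probability mismatch argument reduces to a clean total-variation calculation that exploits both the branching of $\sigma_t$ in $y_{t-1}$ and the distinctness of the required slots along a trajectory.
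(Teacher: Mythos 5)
Your high-level strategy is the same as the paper's: a support of $k=\floor{m/(2\log(2q))}$ equiprobable ``slots'', a utility function whose round-$t$ target is determined by the previous response, an argument that an equivalent stream algorithm with exactly $q$ selections must acquire the targets in the pool's order, and a per-round waiting-time bound. The realization differs in an interesting way: you branch on an unbiased coin at every round and argue under a single distribution via total variation, whereas the paper uses a ``caterpillar'' of $2q-1$ elements together with a family of deterministic label distributions $\cD_0,\ldots,\cD_q$ and a coupling between runs on $\cD_0$ and $\cD_{t^*}$. Your single-distribution route can be made to work, but two of your steps are not sound as written.

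First, the inductive premise that ``conditional on the stream's history through its $t$-th selection, the distribution of the remaining output pairs must coincide with the pool's conditional on rounds $\geq t$'' is false in general: $(q,\cD)$-equivalence constrains only the marginal law of the unordered output set, not conditionals given the stream's internal history --- this is precisely what has to be proved, not assumed. Your fallback (a positive-probability mismatch on the unrealized branch) is the right idea, but it must be organized as a disjoint decomposition over the \emph{first} deviating round $j$, since each deviation is only caught with probability $\tfrac12$; that yields $\P[\text{any deviation}]\leq 2\,\P[\text{output invalid}]\leq 2\cdot\frac{1}{2q}=\frac1q$, and a naive per-round union bound would be vacuous. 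Second, and more importantly, ``the wait for the correct slot is geometric with mean $k$, hence $\E[\niter]\geq \P[E]\cdot qk$'' is not a valid deduction: the stream is only obliged to hit the correct slot with probability $1-O(1/q)$, so it may trade success probability for shorter waits (e.g.\ give up early on a small-probability branch), and conditioning on success biases toward early arrivals. Converting ``selects the correct slot with probability $\geq\beta$'' into a lower bound on the expected wait is exactly what \lemref{berexpect} is for (giving $\beta^2 k/2=k/8$ per round with $\beta=\tfrac12$, which is where the $\tfrac{q}{8}$ in the statement comes from); alternatively a Wald/stopping-time identity gives $\E[W]\geq k\,\P[B_W=1]$. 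Your claimed constant $\tfrac12 qk$ is unproven, though the weaker $qk/8$ suffices. Finally, the regime $k<2q$ that you flag is genuinely unresolved in your plan --- reusing slots destroys the distinctness on which your no-hedging argument rests --- although the paper's own construction implicitly requires $n\geq 2q-1$ as well, so this is a shared, not a distinguishing, weakness.
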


\section{Active Learning for Binary Classification}\label{sec:active}
In active learning for binary classification, recent works provide relatively
tight label complexity bounds, that hold for both the stream-based and the
pool-based settings. In \cite{BalcanLo13}, tight upper and lower bounds for
active learning of homogeneous linear separators under log-concave distributions are provided. The bounds hold for
both the stream-based and the pool-based setting, and with the same bound on
the number of unlabeled examples. In \cite{HannekeYa15}, tight minimax label
complexity bounds for active learning are provided for several classes of
distributions. These bounds also hold for both the stream-based and the
pool-based setting. In that work no restriction is placed on the number of
unlabeled examples.  

These results leave open the possibility that
for some distributions, a pool-based algorithm with the same label complexity
as a stream-based algorithm might require significantly fewer unlabeled examples. In \exref{exampleh} and \thmref{active} we show that this is indeed the case.

\begin{example}\label{ex:exampleh}
For given integers $m$ and $q \leq m$, and $T \leq q$, define $\cX = \{a_{k,j} \mid k \in [q], j \in \{0,\ldots,2^{\min(k,T)-1}-1\}\} \cup \cX'$, where $\cX'$ includes arbitrary elements so that $|\cX| = n$, for some $n \geq q2^T/2$.
Define the following hypothesis class $\cH \subseteq \cY^\cX$.
\begin{equation}\label{eq:exampleh}
\cH := \{ h_{i} \mid i \in \{0,\ldots,2^q-1\}\}\text{, where }h_i(a_{k,j}) = 
\begin{cases} 
\one[i \bmod 2^k = j] & k \leq T,\\
\one[\,\lfloor i/2^{T-k} \rfloor \bmod 2^T = j].  & k > T.
\end{cases}
\end{equation}
Essentially, for $k \leq T$, $h_i(a_{k,j}) = 1$ if the $k$ least significant bits in the binary expansion of $i$ are equal to the binary expansion of $j$ to $T$ bits. For $k \geq T$, $h_i(a_{k,j}) = 1$ if $T$ consecutive bits in $i$, starting from bit $T-k$, are equal to the binary expansion of $j$.
\end{example}

\begin{theorem}\label{thm:active}
Let $q \geq 22$ and $m \geq 8\log(2q)q^2$ be integers. 
Consider \exref{exampleh} with $m,q$, setting $T = \ceil{\log_2(q)}$ and $n = \floor{m/7\log(2q)}$. Consider $\cH$ as defined in \eqref{exampleh}. There exist $\delta,\epsilon \in (0,1)$ such that there is a pool-based active learning algorithm that uses a pool of $m$ unlabeled examples and $q$ labels, such that for any distribution $\cD$ which is consistent with some $h^* \in \cH$ and has a uniform marginal over $\cX$, with a probability of at least $1-\delta$, $\P[\hat{h}(X) \neq h^*(X)] \leq \epsilon$.
On the other hand, for $q > 22$, any stream-based active learning algorithm with the same guarantee requires at least $\frac{q}{32}\floor{\frac{m}{7\log(2q)}}$ unlabeled examples in expectation. 
\end{theorem}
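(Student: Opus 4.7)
The plan is to handle the upper bound and the lower bound separately, both exploiting the hierarchical structure of $\cH$ in \eqref{exampleh}.

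For the upper bound, I would construct the following explicit pool learner. Given the pool $(x_1,\ldots,x_m)$, the learner maintains the subset $V_k \subseteq \cH$ of hypotheses consistent with all labels seen so far, initialized as $V_0 = \cH$. The structure of \eqref{exampleh} implies that in round $k$ there is a unique ``distinguishing'' index $j^*_k \in \{0,\ldots,2^{\min(k,T)}-1\}$ such that querying any copy of $a_{k,j^*_k}$ from the pool reduces $|V_k|$ by roughly a factor of two, by revealing one additional bit of the unknown index $i^*$ (for $k \leq T$) or of the shifted $T$-bit window (for $k > T$). After $q$ such queries the algorithm has pinned down $i^*$ exactly, so $\hat h = h^*$ and the error is $0$. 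The only failure mode is that some $a_{k,j^*_k}$ is absent from the pool; since $\cD_X$ is uniform on $\cX$ with $|\cX| = n$ and $m/n \geq 7\log(2q)$, a Chernoff bound plus a union bound over the at most $n$ elements of $\cX$ shows that every element appears in the pool with probability at least $1-\delta$ for a suitable constant $\delta \in (0,1)$, so any $\epsilon \in (0,1)$ then satisfies the error requirement.

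For the lower bound, I would adapt the adversarial argument underlying \thmref{utilitylowerbound}. Fix the uniform $\cD_X$ on $\cX$ and draw $h^*$ uniformly from $\cH$. The key structural claim implicit in \eqref{exampleh} is that in round $k$ of any correct learner, only the label of the single element $a_{k,j^*_k}$, with $j^*_k$ determined by the responses of the previous $k-1$ rounds, yields information sufficient to proceed; observing any other element $a_{k',j'}$ with $k' \neq k$ or $j' \neq j^*_k$ leaves at least a constant fraction of the posterior intact. Granted this claim, a stream learner that achieves $\epsilon$-error with probability at least $1-\delta$ must, in each round, wait for this specific element. Since $\cD_X$ is uniform on $\cX$, the per-round expected wait is $\Omega(n)$, and summing over $q$ rounds gives the claimed $\frac{q}{32}\floor{m/(7\log(2q))}$ lower bound, with constants tuned to absorb the $O(\delta)$ loss from the failure event.

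The main obstacle is the structural claim in the lower bound: a stream learner could in principle observe and cache labels of many elements before committing to a sequence of selections, potentially extracting information from ``wrong'' elements. The shifted-bit construction for $k > T$ in \eqref{exampleh} is precisely what blocks this loophole, since the distinguishing elements for different rounds are disjoint and the labels of off-path elements are independent of the still-hidden bits of $i^*$ once one conditions on the history. I would make this rigorous via an adversarial coupling that permutes the bits of $i^*$ not yet revealed by the responses, in the spirit of the coupling used to prove \thmref{utilitylowerbound}, and then translate it to a wait-time lower bound via a geometric-distribution argument on when the correct element of $\cX$ first appears in the i.i.d.~stream.
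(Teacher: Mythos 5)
Your upper bound matches the paper's: both query the unique distinguishing element $a_{t,j}$ in each round (with $j$ determined by the bits of $i^*$ already revealed), identify $i^*$ exactly after $q$ labels, and charge the failure probability $\delta$ to the event that some element of $\cX$ is missing from the pool. One detail you gloss over that actually matters: the theorem's lower bound is for stream algorithms ``with the same guarantee,'' so $\epsilon$ cannot be arbitrary --- the paper sets $\epsilon = 1/n$ precisely so that $\P[\hat h(X) \neq h^*(X)] \leq \epsilon$ forces $\hat h = h^*$, and likewise $\delta$ must be driven down to $O(1/q^3)$ (which the Chernoff bound does give) for the lower-bound argument to have any teeth. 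Writing ``any $\epsilon \in (0,1)$'' and ``a suitable constant $\delta$'' discards exactly the quantitative leverage the second half of the theorem needs.

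The genuine gap is in the lower bound, at the step you yourself flag as the crux. Your structural claim --- that an off-path query ``leaves at least a constant fraction of the posterior intact'' --- is true but not sufficient as stated: from it you conclude that the learner ``must, in each round, wait for this specific element,'' yet a learner could deviate in each round with some constant probability bounded away from $1$, in which case the probability of following the full on-path trajectory $a_{1,0},\ldots,a_{q,0}$ decays exponentially in $q$, and the waiting-time lemma then gives nothing (Lemma~\ref{lem:berexpect} needs $\P[B_I=1] \geq \alpha$ for a constant $\alpha$). What must actually be shown is that the per-round deviation probability is $O(1/q)$, so that a union bound over $q$ rounds leaves a constant probability of the entire on-path trajectory. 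The paper gets this via an information-theoretic argument: with $|\cH| = 2^q$ and only $q$ binary responses, every query must be an exact bisection of the version space; a deviation probability of $1-\alpha$ in round $1$ forces $\ent(H \mid Y, S_X) \geq (1-\alpha)^2/8$, and Fano's inequality caps this by roughly $2\delta q$, yielding $1-\alpha \leq 1/(2q)$ once $\delta \leq 1/(128q^3)$. Your proposed coupling (permuting unrevealed bits of $i^*$) is a plausible alternative route to the same conclusion --- a counting argument on the at most $2^{q-t}$ leaves remaining after a wasteful query can substitute for Fano --- but as written it is a plan for a proof, not a proof: the quantitative conversion from ``information loss'' to ``deviation probability $O(1/q)$ per round,'' which is the entire content of this direction, is missing. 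Note also that the coupling in Theorem~\ref{thm:utilitylowerbound} relies on the utility function mechanically forcing the selection order; here no such mechanism exists, which is exactly why the paper switches to an entropy argument for the active-learning case.
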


The proof is provided in \appref{proofs}.
This result shows that a gap between the stream-based and the pool-based settings exists not only for general interactive algorithms, but also specifically for active learning for binary classification. 

The gap is more significant when $q = \tilde{\Theta}(\sqrt{m})$, and can be as large as $\tilde{\Omega}(m^{3/2})$ unlabeled examples in a stream, versus $m$ that are required in a pool. It has been previously observed \citep{GonenSaSh13b} that in some cases, a specific pool-based active learning algorithm for halfspaces is superior to the classical stream-based algorithm CAL \citep{CohnAtLa94}. \thmref{active} shows that this is not a limitation specifically of CAL, but of any stream-based active learning algorithm. 

The upper bound in \thmref{goodsec} for utility-based pool algorithms can be applied for several deterministic pool-based active-learning algorithms which use a utility function \cite[e.g.,][]{GolovinKr11, GonenSaSh13b, CuongLeYe14}. 
The upper bound shows that when the label budget $q$ is relatively small, the gap between the stream and the pool settings is not significant. For instance, consider an active learning problem in which a utility-based pool active learner achieves a label complexity close to the information-theoretic lower bound for the realizable setting \citep{KulkarniMiTs93}, so that $q \in \Theta(\log(1/\epsilon))$. The passive learning sample complexity is at most $m \in \Theta(1/\epsilon)$. Therefore, a stream-based active learner with the same properties needs at most $O(\log(1/\epsilon)/\epsilon)$ unlabeled examples. Therefore, in this case the difference between the pool-based setting and the stream-based setting can be seen as negligible.

\section{Conclusions} \label{sec:conclusion}

In this work we studied the relationship between the stream-based and the pool-based interactive settings, by designing algorithms that emulate pool-based behavior in a stream-based setting, and proving upper and lower bounds on the stream sizes required for such emulation. Our results concern mostly the case where the label budget of the stream algorithm is similar or identical to that of the pool algorithm. We expect that as the label budget grows, there should be a smooth improvement in the expected stream length, which should approach $m$ as the label budget approaches $m$. There are many open problems left for further work. Among them, whether it is possible to emulate utility based pool algorithms with a linear stream size in $q$ and exactly $q$ labels, and a relaxation of the requirement for exact equivalence, which would perhaps allow using smaller streams.

\subsection*{Acknowledgements}
This work was supported in part by the Israel Science Foundation (grant No. 555/15).

\bibliographystyle{abbrvnat}
\bibliography{\mysharedbib}

\appendix
\section{Additional Proofs}\label{ap:proofs}

Several proofs use the following lemma.
\begin{lemma}\label{lem:berexpect}
Let $\alpha \in (0,\half), p \in (0,\alpha^2/2)$. Let $X_1,X_2,\ldots$ be independent Bernoulli random variables with $\P[X_i = 1] \leq p$. Let $I$ be a random integer, which can be dependent on the entire sequence $X_1,X_2,\ldots$. Suppose that $\P[X_I = 1] \geq \alpha$. Then $\E[I] \geq \frac{\alpha^2}{2p}$.
\end{lemma}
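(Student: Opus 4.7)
The plan is to bound $\E[I]$ from below using the tail-sum identity $\E[I] = \sum_{k=0}^\infty \P[I > k]$ (valid for any positive-integer-valued $I$), combined with a union-bound lower bound on each individual tail probability.

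For the key inequality, I would fix any integer $k \geq 0$ and split
\[
\P[X_I = 1] \leq \P[X_I = 1,\ I \leq k] + \P[I > k].
\]
On the event $\{I \leq k\}$ one has $X_I \in \{X_1,\ldots,X_k\}$, so the first term is bounded by $\P\bigl[\bigcup_{i=1}^k \{X_i = 1\}\bigr] \leq kp$ by the union bound and the per-coordinate bound. (Independence is not actually used; marginal bounds on the $X_i$ suffice.) Combining with the hypothesis $\P[X_I = 1] \geq \alpha$ yields $\P[I > k] \geq \alpha - kp$ for every $k \geq 0$.

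Setting $K := \floor{\alpha/p}$, summing the tail lower bounds gives
\[
\E[I] \geq \sum_{k=0}^{K}(\alpha - kp) = (K+1)\bigl(\alpha - pK/2\bigr) \geq (K+1) \cdot \frac{\alpha}{2} \geq \frac{\alpha}{p} \cdot \frac{\alpha}{2} = \frac{\alpha^2}{2p},
\]
where the penultimate inequality uses $K \leq \alpha/p$ and the last uses $K+1 \geq \alpha/p$. There is no real obstacle here; the argument is essentially a careful bookkeeping of a union bound plus the tail-sum formula. The only mildly delicate point is tracking constants tightly enough to recover exactly the factor $1/2$ in $\alpha^2/(2p)$: the estimate $K+1 \geq \alpha/p$ compensates exactly for the factor $1/2$ that drops out of $\alpha - pK/2 \geq \alpha/2$. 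The hypotheses $\alpha < \half$ and $p < \alpha^2/2$ are not used in the argument itself; they serve only to make the conclusion nontrivial, in particular guaranteeing $\alpha^2/(2p) > 1$.
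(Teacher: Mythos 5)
Your proof is correct, and it takes a genuinely different route from the paper's. You combine the tail-sum identity $\E[I] = \sum_{k\geq 0}\P[I>k]$ with the union-bound estimate $\P[X_I=1,\ I\leq k]\leq kp$, giving $\P[I>k]\geq \alpha-kp$, and then sum up to $K=\floor{\alpha/p}$; the bookkeeping $(K+1)(\alpha-pK/2)\geq (\alpha/p)(\alpha/2)$ checks out. The paper instead argues by an extremality reduction: it asserts that $\E[I]$ is minimized when each $\P[X_i=1]$ equals $p$ and when $I$ is concentrated on the first success time $W$ whenever $W\leq T$ (with $T$ the largest integer satisfying $\P[W\leq T]\leq\alpha$), then computes $\E[W\cdot\one[W\leq T]]$ for the geometric variable $W$ and finishes with a calculus estimate using the hypotheses $\alpha<\half$ and $p<\alpha^2/2$. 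Your argument buys several things: it is shorter, it avoids the paper's somewhat informal "the minimizing $I$ looks like this" step, it genuinely does not use independence of the $X_i$ (only the marginal bounds), and, as you note, it does not need the restrictions on $\alpha$ and $p$ at all. The paper's approach, by identifying the exact extremal configuration, would in principle yield the sharpest possible constant for this problem, but as executed it only extracts the same $\alpha^2/(2p)$ bound, so nothing is lost by your more elementary route. One small caveat: the tail-sum identity as you invoke it requires $I$ to be a nonnegative integer, which the lemma statement does not literally impose ("a random integer"); in the paper's intended applications $I$ counts observed elements, so this is harmless, but it is worth stating the assumption $I\geq 0$ explicitly.
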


\begin{proof}
$\E[I]$ is minimized under the constraint when $\P[X_i = 1] = p$. Therefore assume this equality holds. Let $W$ be the random variable whose value is the smallest integer such that $X_W = 1$. 
Let $T$ be the largest integer such that $\P[W \leq T] \leq \alpha$.

The expectation of $I$ is lower bounded subject to $\P[X_I = 1] \geq \alpha$ by $I$ such that
$\P[I = W \mid W \leq T] = 1$, $\P[I = W \mid W = T+1] = \alpha - \P[W \leq T]$, and in all other cases, $I = 0$. Therefore,
\[
\E[I] \geq \E[W \cdot \one[W \leq T]].
\]
We have
\begin{align*}
  \frac{1}{p} &= \E[W] = \E[W \cdot \one[W \leq T]] + \E[W \cdot \one[W > T]]\\
 &= \E[W \cdot \one[W \leq T]] + (\frac{1}{p} + T)(1-p)^T.
\end{align*}
Therefore
\[
\E[I] \geq \E[W \cdot \one[W \leq T]] = \frac{1}{p} - (\frac{1}{p} + T)(1-p)^T.
\]
From the definition of $T$, $T$ is the largest integer such that $1-(1-p)^T \leq \alpha$. Hence $T \geq  \frac{\log(1-\alpha)}{\log(1-p)}$ and $(1-p)^T \leq (1-\alpha)/(1-p)$. Therefore
\[
\E[I] \geq \frac{1}{p} - \left(\frac{1}{p} + \frac{\log(1-\alpha)}{\log(1-p)}\right)\frac{1-\alpha}{1-p} \geq \frac{1}{p} - \left(\frac{1}{p} - \frac{\log(1-\alpha)}{2p}\right)\frac{1-\alpha}{1-p}
\]
Hence
\[
p \E[I] \geq 1 + \frac{1-\alpha}{1-p}(\log(1-\alpha)/2 - 1)
\]
For $p \leq a^2/2$ and $\alpha \in (0,1/2)$, elementary calculus shows that $p \E[I] \geq \alpha^2/2$.
\end{proof}

\begin{proof}[of \thmref{genequiv}]
Consider the probability space defined by the infinite sequence $S' \sim \cD^\infty$ which generates the input to the stream algorithm, and an independent sequence $S \sim \cD^m$ which is the input to the pool algorithm. 

For $z_1,\ldots,z_q \in \cX \times \cY$, denote $Z_j = \{z_1,\ldots,z_j\}$. We have, for every $i \in [q]$, 
\begin{align*}
&d\P[\outpoolr(S,[i]) =_\pi Z_i]=\\
&\quad \sum_{j=1}^i d\P[\outpoolr(S,i) = z_{j} \mid \outpoolr(S,[i-1]) =_\pi Z_i \setminus \{ z_j\}]\cdot d\P[\outpoolr(S,[i-1]) =_\pi Z_i \setminus \{ z_j\}].
\end{align*}
The same holds for $\outstreamr(S',\cdot)$. 
To show the equivalence it thus suffices to show that for all $z_1,\ldots,z_q \in \cX \times \cY$, $i \in [q]$, 
\[
d\P[\outstreamr(S',i) = z_i \mid \outstreamr(S',[i-1]) =_\pi Z_{i-1}] = 
d\P[\outpoolr(S,i) = z_i \mid \outpoolr(S,[i-1]) =_\pi Z_{i-1}].
\]

From the definition of $\cA_s$ we have 
\begin{align*}
&d\P[\outstreamr(S',i) = z_i \mid \outstreamr(S',[i-1]) =_\pi Z_{i-1}] \\
&=d\P[\outpoolr(S_{i-1}\circ S_i',i) = z_i \mid S_{i-1}=_\pi Z_{i-1} \wedge \outpoolr(S_{i-1}\circ S_i',[i-1]) =_\pi Z_{i-1}]\\
&=d\P[\outpoolr(S,i) = z_i \mid \outpoolr(S,[i-1]) =_\pi Z_{i-1}].
\end{align*}
The last equality follows since $\cA_p$ is permutation invariant and never selects the same index twice. This proves the equivalence.
\end{proof}

\begin{proof}[of \thmref{lowergen}]
Denote by $\Pi_k$ the set of permutations over $[k]$. 
Let the domain of elements be $\cX = [0,2]$ and assume responses in $\cY = \{0,1\}$. We now define a pool algorithm as follows.
Call a pool $S_X$ in which exactly one element in the pool is in $(1,2]$ and the rest are in $[0,1]$ a ``good pool''. On bad pools, $\cA_p$ always selects only elements in $[0,1]$ or only elements in $(1,2]$. 

For a good pool, denote for simplicity the single element in $(1,2]$ by $x_m$, and other elements by $x_1,\ldots,x_{m-1}$, where $x_{i-1} < x_i$ for $i \in [m-1]$.
Define a mapping $\psi: (1,2] \rightarrow \Pi_{m-1}$, such that if $x_m$ is uniform over $(1,2]$, then for $\psi(x_m)$ all permutations in the range are equally likely. 

$\cA_p$ behaves as follows: Let $\sigma = \psi(x_m)$. The first $q-1$ elements it selects are $x_{\sigma(1)},\ldots,x_{\sigma(q-1)}$. The last element it selects is $x_m$ if the response for all previous elements was $0$, and $x_{\sigma(q)}$ otherwise. 

Define the marginal $\cD_X$ over $\cX$ in which for $X \sim \cD_X$, $\P[X \in [0,1]] = 1-1/m$, $\P[X \in (1,2]] = 1/m$, and in each range $[0,1],(1,2]$, $X$ is uniform. The probability of a good pool under $\cD \in \distset(\cD_X)$ is $(1-1/m)^{m-1} \geq 1/e^2 =: p_g$. 
We now show a lower bound on the expected number of iterations of a stream algorithm which is $(q,\cD)$-equivalent to any $\cD \in \distset(\cD_X)$. 
Let $\cD_0$ be the distribution over $\cX \times \cY$ such that for $(X,Y) \sim \cD_0$, $X \sim \cD_X$ and $Y = 0$ with probability $1$. 
Let $S \sim \cD_0^m$ be the input to $\cA_p$.  

The proof will follow a series of claims:
\begin{enumerate}
\item The probability that, on a good pool, $\psi(x_m)$ is in a given set of permutation $\Phi(Z)$, where $Z$ is the set of first $q-1$ selected elements, is at least $1/2$.
\item When $\cA_s$ emulates a good pool, it selects an element from $(1,2]$ only after selecting $q-1$ elements from $[0,1]$. 
\item Therefore, when $\cA_s$ emulates a good pool, the expected number of observed elements until selecting the last element is lower bounded, and so the overall expected number is lower bounded.
\end{enumerate}

We start with \textbf{claim 1}. For a given set $Z = \{z_1,\ldots,z_{q-1}\} \subseteq [0,1]$, define the set of permutations $\Phi(Z)$ as follows. The expected number of elements that are smaller than $z_i$ in $S_X \sim \cD_X^m$, if $Z \subseteq_\pi  S_X$, is $n_i = (m-q)z_i + \sum_{j=1}^{q-1} \one[z_j < z_i]$. Let $\epsilon := \sqrt{(m-q)\log(4q)/2}$, and define
\begin{equation}\label{eq:phi}
\Phi(Z) := \{ \sigma \in \Pi_{m-1} \mid \exists \sigma' \in \Pi_{q-1}, \forall i \in [q-1], |\sigma^{-1}(i) - n_{\sigma'(i)}| \leq \epsilon \}.
\end{equation}
These are the permutations such that the first $q-1$ elements according to the permutation are mapped from elements with ranks in $[n_i - \epsilon,n_i+\epsilon]$. 
For $x \in S_X$, denote by $r_S(x)$ the rank of $x$ in $S_X$, when the elements in $S_X$ are ordered by value. Since $\psi(\outpool(S,q))$ determines the choice of $Z$ from $S_X$, we have
\begin{align*}
&\P[\psi(\outpool(S,q)) \in \Phi(Z) \mid \outpool(S,[q-1])=_\pi Z \wedge S \text{ is good}]\\
 &\quad\geq 
\P[\forall i \in [q-1], |r_S(z_{i}) - n_{i}| \leq \epsilon \:\mid\: \outpool(S,[q-1])=_\pi Z \wedge S \text{ is good}] \\
&\quad=
\P[\forall i \in [q-1], |r_S(z_{i}) - n_{i}| \leq \epsilon \mid Z \subseteq_\pi S\wedge S \text{ is good}].
\end{align*}
The last inequality follows since $\psi(\outpool(S,q))$ is uniform over all permutations.  By Hoeffding's inequality, for any $i \leq q-1$,
\[
\P[ |r_S(z_{i}) - n_{i}| > \epsilon \mid Z \subseteq_\pi S \wedge S \text{ is good}] \leq 2\exp(-2\epsilon^2/(m-q)).
\]
Therefore, using the definition of $\epsilon$ and applying the union bound, we get, for any $Z \subseteq [0,1]$ with $|Z| = q-1$, 
\begin{equation}\label{eq:claim1}
\P[\psi(\outpool(S,q)) \in \Phi(Z) \mid \outpool(S,[q-1])=_\pi Z \wedge S \text{ is good}] \geq \half.
\end{equation}
This completes the proof of \textbf{claim 1}.

We now turn to \textbf{claim 2}. Consider a stream algorithm which is $(q,\cD)$-equivalent to $\cA_p$ for any $\cD \in \distset(\cD_X)$. Consider runs of $\cA_s$ with input $S' \sim \cD_0^\infty$. Denote by $E_g$ the event that the output of $\cA_s$ is equal to a possible output of $\cA_p$ on a good pool with $S \sim \cD_0^m$. Then $\P[E_g] \geq p_g$. Claim 2 is that 
\begin{equation}\label{eq:claim2}
\P[\outstream(S',[q-1]) \subseteq_\pi [0,1] \mid E_g] = 1.
\end{equation}
In other words, when simulating a good pool, the elements in $[0,1]$ are all selected before the element in $(1,2]$. 

To show claim 2, note that by the definition of $\cA_p$, for any source distribution over $\cX \times \cY$, if $\cA_p$ outputs a set with elements both in $[0,1]$ and in $(1,2]$, then there is exactly one element in $(1,2]$ in the output, and all the responses in the output for elements in $[0,1]$ are $0$ with probability $1$.

Now, suppose that $\P[\outstream(S',[q-1]) \subseteq_\pi [0,1] \mid E_g] < 1$. Then $\P[\outstream(S',q) \in [0,1] \mid E_g] > 0$, since there can be only one element in $(1,2]$ in the output of a good pool. But, consider running $\cA_s$ with a source distribution $\cD' \in \distset(\cD_X)$ such that for $(X,Y) \sim \cD'$, $X \sim \cD_X$ and $\P_{\cD'}[Y = 0|X=x] = \half$ for all $x$. There is a positive probability that in the first $q-1$ selected elements all the responses are $0$, just as for $\cD_0$. Therefore, also for $S'' \sim {\cD'}^\infty$, \mbox{$\P[\outstream(S'',q) \in [0,1] \mid E_g] > 0$}. But then there is a positive probability that the response for the last element, which is in $[0,1]$, is $1$, contradicting the $(q,\cD')$-equivalence of the pool and $\cA_s$. This proves \textbf{claim 2}.

We now show \textbf{claim 3} which completes the proof. From claim 2 in \eqref{claim2}, we conclude that $\P[\outstream(S',q) \in (1,2] \mid E_g] = 1.$
Therefore, from claim 1 in \eqref{claim1}, for any $Z \subseteq_\pi [0,1]$ with $|Z| = q-1$, 
\[
\P[\psi(\outstream(S',q)) \in \Phi(\outstream(S',[q-1])) \mid  E_g] \geq 1/2.
\]
Therefore
\[
\P[\psi(\outstream(S',q)) \in \Phi(\outstream(S',[q-1]))] \geq \P[E_g]/2 \geq p_g/2.
\]

Now,  let $X_i \sim \cD_X$ be the $i$'th element observed after selecting the first $q-1$ elements, and let $B_i = \one[\psi(X_i) \in \Phi(Z)]$, where $Z$ is the set of $q-1$ selected elements. $B_i$ are independent Bernoulli random variables, each with a probability of success at most $p$, where from the definition of $\phi$ in \eqref{phi}, 
\[
p \leq \frac{|\Phi(Z)|}{|\Pi_{m-1}|} \leq \left(\frac{(q-1)(2\epsilon+1)}{m-1}\right)^{q-1} \leq  \left(\frac{2q^2\log(4q)}{ m}\right)^{\frac{q-1}{2}}.
\]
Let $I$ be the number of elements $\cA_s$ observes after selecting $Z$, until selecting element $q$. We have $\P[B_I=1] \geq p_g/2$.
By \lemref{berexpect}, for $p \leq p_g^2/8$, $p \E[I] \geq p_g^2/8$. From the assumption in the theorem statement, $2q^2 \log(4q)/m \leq \half$, hence for a large enough $q$, $p \leq 2^{-(q-1)} \leq p_g^2/8$, and so $\E[I] \geq \frac{p_g^2}{8}p^{-1}$. Hence there is a constant such that 
\[
\E[I] \geq C\left(\frac{ m}{q^2\log(4q)}\right)^{\frac{q-1}{2}}.
\]
Since $\E[\niter(\cA,S,q)] \geq \E[I]$, this completes \textbf{claim 3} and finalizes the proof.
\end{proof}

\newcommand{\good}{\cG}

\begin{proof}[of \thmref{equivsec}]
Consider the probability space defined by $S \sim \cD^m$ and $S' \sim \cD^\infty$, where $S,S'$ are independent.
We prove the equivalence by showing that for any $j \in [q]$ and $L_j = ((x_{i,k_i},y_{i,k_i}))_{i \in [j]}$ that could have been selected by the pool algorithm, 
\[
d\P[\outpoolr(S,j+1) \mid \outpoolr(S,[j]) = L_j] = d\P[\outstreamr(S',j+1) \mid \outstreamr(S',[j]) = L_j].
\]
For a given $L_j$, denote by $\cD_{j+1}$ the distribution generated by drawing $(X,Y) \sim \cD$ conditioned on $X \in \cX_{j+1}$, where $\cX_{j+1}$ depends on $L_j$. Denote by $\good$ all the finite sequences of pairs such that when the optimal secretary problem solution is applied to the sequence, it succeeds. That is, the optimal value under the score $(x,y) \rightarrow \cU(x, L_j)$ is indeed selected. From the definition of $\cA_\cU^s$, we have
\begin{align*}
&d\P[\outstreamr(S',j+1) \mid \outstreamr(S',[j]) = L_j]= d\P_{\bar S \sim \cD_{j+1}^{m-j}}[\argmax_{(x,y) \in \bar S}\cU(x, L_j) \mid \bar S \in \good].
\end{align*}
For a given sequence $\bar S =((\bar{x}_i,y_i))_{i \in [m-j]}$, let $\sigma(\bar S):[m-j] \rightarrow [m-j]$ be a permutation such that for all $i \leq m-j$, $\bar{x}_{\sigma(i)} \leq \bar{x}_{\sigma(i+1)}$.
The success of the optimal secretary problem algorithm depends only on the ordering of ranks in its input sequence, hence there is a set of permutations $\good'$ such that $\bar S \in \good$ if and only if $\sigma(\bar S) \in \good'$. 
Now, $\argmax_{(x,y) \in \bar S}\cU(x, L_j)$ depends only on the identity of pairs in $\bar S$, while $\sigma(\bar S)$ depends only on their order. Since the elements in $\bar S$ are i.i.d., these two properties are independent. Therefore
\[
d\P_{\bar S \sim \cD_{j+1}^{m-j}}[\argmax_{(x,y) \in \bar S}\cU(x, L_j) \mid \bar S \in \good] = d\P_{\bar S \sim \cD_{j+1}^{m-j}}[\argmax_{(x,y) \in \bar S}\cU(x, L_j)].
\]
Therefore
\begin{align*}
&d\P[\outstreamr(S',j+1) \mid \outstreamr(S',[j]) = L_j]\\
&=d\P_{\bar S \sim \cD_{j+1}^{m-j}}[\argmax_{(x,y) \in \bar S}\cU(x, L_j) ]\\
 &= d\P_{\hat{S} \sim \cD^{m-j}}[\argmax_{(x,y) \in \hat{S}}\cU(x, L_j)  \mid \hat{S} \subseteq (\cX_{j+1} \times \cY)^{m-j}]\\
&= d\P_{\hat{S} \sim \cD^{m-j}}[\argmax_{(x,y) \in \hat{S}}\cU(x, L_j)  \mid \forall (x,y) \in \hat{S}, i \in [j],\:\: \cU(x,L_{i-1}) < \cU(x_{i,k_i},L_{i-1})]\\
&=d\P_{\hat{S} \sim \cD^{m-j}}[\argmax_{(x,y) \in \hat{S}}\cU(x, L_j)  \mid \outpoolr(L_j \circ \hat{S},[j]) = L_j]\\
&=d\P_{S \sim \cD^m}[\argmax_{(x,y) \in S\setminus L_j}\cU(x, L_j)  \mid \outpoolr(S,[j]) = L_j]\\
&=d\P_{S \sim \cD^m}[\outpoolr(S)(j+1)  \mid \outpoolr(S,[j]) = L_j].
\end{align*}
Here $L_i$ is the prefix of length $i$ of $L_j$.
Since this equality holds for all $j \in [q-1]$, $d\P[\outstreamr(S',[q])] = d\P[\bar O_q(S,[q])]$.
\end{proof}

\begin{proof}[of \thmref{utilitylowerbound}]
Let $n = \floor{\frac{m}{2\log(2q)}}$, and let $\cD_X$ be a uniform distribution over $\cX = \{a_i \mid i \in [n]\}$. Assume $\cY = \{0,1\}$. A pool of size $m$ then includes all elements in $A:=\{a_i \mid i \in [2q-1]\}$ with a probability of at least $\alpha \geq 1-(2q-1)\exp(-m/n) \geq 1-\frac{1}{2q}$.

Consider a utility function $\cU$ such that given a history of the form $((a_1,0),\ldots,(a_t,0))$ for $t \in [q-1]$, assigns a maximal score in $\cX$ to $a_{t+1}$, 
and given a history of the form $((a_1,0),\ldots,(a_{t-1},0),(a_t,1))$, for $t\in[q-1]$, assigns a maximal score in $\cX$ to $a_{q+t-1}$. 
Then, in a pool that includes all elements $a_1,\ldots,a_{2q-1}$, the pool algorithm based on $\cU$ behaves as follows: In every round, if all selected elements so far received the response $0$, it selects at round $t$ the element $a_t$. Otherwise, it selects the element $a_{q+t-1}$. 

Let $\cD_0$ be a distribution in which the response is deterministically zero.
If the distribution is $\cD_0$, $\cA_s$ selects $Z_0 = \{a_1,\ldots,a_q\}$ with a probability at least $\alpha$. 
Denote $\cD_{t}$ for $t \in [q]$, in which the response is deterministically zero for $X \in \{a_1,\ldots,a_q\}\setminus \{a_t\}$ and $1$ for $a_t$. For this distribution, the algorithm must select the elements in \mbox{$Z_t = \{a_1,\ldots,a_t,a_{q+t},\ldots,a_{2q-1}\}$} with a probability at least $\alpha$.

We show a lower bound on the probability that $\cA_s$ selects $a_1,\ldots,a_q$ in order when the input sequence is $S \sim \cD_0^\infty$. Denote this probability $\beta$, and the event that this occurs $E$.

Consider the random process defined by the input sequence $S \sim \cD_0^\infty$ and the randomness of $\cA_s$. Let $T$ be a random variable, such that $T$ is the smallest round in which the algorithm selects some $a_{t'}$, for $t' > T$, or $T=0$ if no such round exists. Since $\P[T \in [q]] = 1-\beta$, there exists some $t^* \in [q]$ such that $\P[T = t^*] \geq (1-\beta)/q$. Now, consider the distribution $\cD_{t^*}$. Define a sequence of pairs $\gamma(S)$ such that $S$ and $\gamma(S)$ have the same elements in the same order, and the responses in $\gamma(S)$ are determined by $\cD_{t^*}$ instead of by $\cD_0$. Clearly, $\gamma(S)$ is distributed according to $\cD_{t^*}^\infty$. 
Consider a run of the algorithm on $S$ and a parallel run (with the same random bits) on $\gamma(S)$. The algorithm selects the same elements for both sequences until the $T$'th selection, inclusive. But the $T$'th selection is some element in $\{a_{T+1},\ldots,a_q\}$. If $T = t^*$, then $Z_{t^*}$ does not include the element selected in round $T$. Since $\cA_s$ selects exactly the set $Z_{t^*}$ with a probability of at least $\alpha$, we have $\P[T = t^*] \leq 1-\alpha$. Therefore $(1-\beta) = \P[T \in [q]] \leq q(1-\alpha)$, hence $\beta \geq \half$. 

Let $W_i$ be the number of elements that $\cA_s$ observes after selecting element $i-1$, until observing the next element. 
Let $X_i \sim \cD_X$ be the $i$'th element observed after selecting the first $i-1$ elements, and let $B_i = \one[X_i) =a_i]$. $B_i$ are independent Bernoulli random variables with $\P[B_i = 1] = 1/n$, and $\P[B_{W_i} = 1] \geq \P[E] = \beta
 \geq \half$. By \lemref{berexpect}, if $\frac{1}{n} \leq \frac18$, $\E[W_i] \geq \frac{n}{8}$.

It follows that the expected number of iterations over $q$ selections is at least $\frac{qn}{8} = \frac{q}{8}\floor{\frac{m}{2\log(2q)}}.$
\end{proof}

\begin{proof}[of \thmref{active}]
Let $\cD_X$ be uniform over $\cX$. 
Let $E$ be the event that $\cX \nsubseteq_\pi S_X$, and define $\delta := \P_{S \sim \cD_X^m}[E].$ Define $\epsilon = 1/n$, so that $\P[\hat{h}(X) \neq h^*(X)] < \epsilon$ if and only if $\hat{h} = h^*$. Let $i^*$ such that $h^* = h_{i^*}$. 

First, a pool-based algorithm can achieve the required accuracy as follows: Let $j_t := i^* \bmod 2^t$ for $t \leq T$, and $j_t :=  \lfloor i^*/2^{T-t} \rfloor \bmod 2^T$ for $t \geq T$. If $E$ holds, then $t$'th element selected by the pool algorithm is $a_{t,j}$, where $j$ is obtained as follows: If $t \leq T$, $j = j_{t-1}$. If $t > T$, $j = \floor{j_{t-1}/2}$. In round $1$, $j = 0$ and the selected element is $a_{1,0}$. Inductively, in this strategy the algorithm finds the $t$'th least significant bit in the binary expansion of $i^*$ in round $t$, thus it can use $j_{t-1}$ to set $j$ for round $t$. Under $E$, after $q$ labels $i^*$ is identified exactly. This happens with a probability of $1-\delta$ for any $\cD$ with the uniform marginal $\cD_X$.

Now, let $\cD_h$ be a distribution with a uniform marginal over $\cX$ with labels consistent with $h \in \cH$. Consider a stream-based algorithm $\cA_s$, denote its output by $\bar{h}$ and its input by $S \sim \cD_{h^*}^\infty$. 

\newcommand{\ent}{\mathbb{H}}
Let $I$ be a random variable drawn uniformly at random from $\{0,\ldots,2^q-1\}$. Let $H = h_I$ be a hypothesis chosen uniformly at random from $\cH$. Consider the probability space defined by $I, S \sim \cD_H^\infty$, and the run of $\cA_s$ on $S$.
Let $(Z_1,Y_1),\ldots,(Z_q,Y_q)$ be the examples that $\cA_s$ receives and the labels it gets, in order. Let $Y = (Y_1,\ldots,Y_q)$. Let $\alpha = \P[Z_1 = a_{1,0} \mid S_X]$. If $Z_1 = a_{1,0}$, then $\P[Y_1 = 0 \mid S_X] = \half$. If $Z_1 \neq a_{1,0}$, then $\P[Y_1 = 0 \mid S_X] \geq 3/4$. Let $\ent$ be the base-2 entropy, and $\ent_b$ be the binary entropy.  Then $\ent_b(Y_1 \mid S_X) = \ent_b((\alpha+1)/4)$, and so
\begin{align*}
\ent(H \mid Y,S_X) &= \ent(H,Y \mid S_X) - \ent(Y_1 \mid S_X) - \ent(Y_1,\ldots,Y_q \mid Y_1,S_X)\\
&\geq q -\ent_b((\alpha+1)/4) - (q-1)\\
&= 1 - \ent_b((\alpha+1)/4). 
\end{align*}
From the Taylor expansion of the binary entropy around $1/2$, $\ent_b(p) \leq 1 - (1-2p)^2/2$, therefore $\ent(H \mid Y, S_X) \geq (1-\alpha)^2/8$.
We have $\P[\bar{h} \neq H] \leq \delta$, hence $\P_{S_X}[\P[\bar{h} \neq H \mid S_X] \leq 2\delta] \geq \half$. By Fano's inequality, for any $S_X$ such that $\P[\bar{h} \neq H \mid S_X] \leq 2\delta$,
\[
(1-\alpha)^2/8 \leq \ent(H \mid Y ,S_X) \leq \ent_b(2\delta) + 2\delta q \leq 2\delta (\log_2(\frac{1}{2\delta}) + 2+q).
\]
Where the last inequality follows from $\ent_b(p) \leq p\log_2(1/p) + 2p$.
From the definition of $\delta$, we have $\delta \leq |\cX|\exp(-m/n)$. Setting $T = \ceil{\log_2(q)}$, and noting that $|\cX| \leq q 2^T/2 \leq q^2$ and so $m \geq n \log(128 q^3 |\cX|)$, we have $\delta \leq \frac{1}{128q^3}$. 

Therefore, for $q \geq  22$, $1-\alpha \leq \frac{1}{2q}$.

It follows that $\P_{S_X}[\P[Z_1 \neq a_{1,0} \mid S_X] \leq \frac{1}{2q}] \geq 1/2$. Now, the same argument holds for any round $t$ conditioned on $I \mod 2^t = 0$ and $Z_1=a_{1,0},\ldots,Z_t=a_{t,0}$, since in this case after $t$ labels, the algorithm has $q-t$ queries left, and needs to select from $\cH'$, which is equivalent to $\cH$, with $q-t$ instead of $q$. Moreover, $\P[\bar{h} = H \mid I \mod 2^t = 0] \leq 1-\delta$ as well, since this holds for every $H$ individually. We conclude that for every $t \leq q$, with a probability at least $\half$ over $S_X$, 
\[
\P[Z_t \neq a_{t,0}\mid S_X, H = h_0] \leq \frac{1}{2q}.
\]
It follows that with a probability at least $\half$ over $S_X$, $\P[Z_1 = a_{1,0},\ldots,Z_q = a_{q,0} \mid S_X, H = h_0] \geq 1/2$. Hence $\P[Z_1 = a_{1,0},\ldots,Z_q = a_{q,0} \mid H = h_0] \geq 1/4$. 

Now, suppose $H = h_0$. Let $W_t$ be the number of elements that $\cA_s$ observes after selecting element $t-1$, until observing the next element. 
Let $X_j \sim \cD_X$ be the $j$'th element observed after selecting the first $t-1$ elements, and let $B_j = \one[X_j =a_{t,0}]$. $B_j$ are independent Bernoulli random variables with $\P[B_j = 1] = 1/n$, and $\P[B_{W_t} = 1] \geq \P[E] = \beta
 \geq \frac14$. By \lemref{berexpect}, if $\frac{1}{n} \leq \frac18$, then $\E[W_t] \geq \frac{n}{32}$.
It follows that the expected number of iterations over $q$ selections is at least $\frac{qn}{32} \geq \frac{q}{32}\floor{\frac{m}{7\log(2q)}}.$
\end{proof}

\end{document}